\newtheorem{theorem}{Theorem}[section]
\newtheorem{proposition}{Proposition}[section]
\newtheorem{definition}{Definition}[section]
\begin{document}
%
\title{Improving the Robustness and Generalization of Deep Neural Network with Confidence Threshold Reduction}
%
%
%
%

\author{Xiangyuan~Yang,
        Jie~Lin,
        Hanlin~Zhang,
        Xinyu~Yang,
        and~Peng~Zhao
\IEEEcompsocitemizethanks{\IEEEcompsocthanksitem Xiangyuan Yang, Jie Lin, Xinyu Yang and Peng Zhao are with School of Computer Science and Technology, Xi'an Jiaotong University, Xi'an, China. \protect
E-mail: ouyang\_xy@stu.xjtu.edu.cn,
\{jielin, yxyphd, p.zhao\}@mail.xjtu.edu.cn
\IEEEcompsocthanksitem Hanlin Zhang is with Qingdao University, Qingdao, China. \protect E-mail: hanlin@qdu.edu.cn
\IEEEcompsocthanksitem Corresponding author: Jie Lin.}
}

%
%

\markboth{Journal of \LaTeX\ Class Files,~Vol.~14, No.~8, August~2015}%
{Yang \MakeLowercase{\textit{et al.}}: Improving the Robustness and Generalization of Deep Neural Network with Confidence Threshold Reduction}
%



\IEEEtitleabstractindextext{%
\begin{abstract}
Deep neural networks are easily attacked by imperceptible perturbation. Presently, adversarial training (AT) is the most effective method to enhance the robustness of the model against adversarial examples. However, because adversarial training solved a min-max value problem, in comparison with natural training, the robustness and generalization are contradictory, i.e., the robustness improvement of the model will decrease the generalization of the model. To address this issue, in this paper, a new concept, namely confidence threshold (CT), is introduced and the reducing of the confidence threshold, known as confidence threshold reduction (CTR), is proven to improve both the generalization and robustness of the model. Specifically, to reduce the CT for natural training (i.e., for natural training with CTR), we propose a mask-guided divergence loss function (MDL) consisting of a cross-entropy loss term and an orthogonal term. The empirical and theoretical analysis demonstrates that the MDL loss improves the robustness and generalization of the model simultaneously for natural training. However, the model robustness improvement of natural training with CTR is not comparable to that of adversarial training. Therefore, for adversarial training, we propose a standard deviation loss function (STD), which minimizes the difference in the probabilities of the wrong categories, to reduce the CT by being integrated into the loss function of adversarial training. The empirical and theoretical analysis demonstrates that the STD based loss function can further improve the robustness of the adversarially trained model on basis of guaranteeing the changeless or slight improvement of the natural accuracy.
\end{abstract}

\begin{IEEEkeywords}
Generalization and robustness, confidence threshold reduction, mask-guided divergence loss, standard deviation loss, adversarial examples, deep neural network.
\end{IEEEkeywords}}

\maketitle

\IEEEdisplaynontitleabstractindextext

%
\IEEEpeerreviewmaketitle

\IEEEraisesectionheading{\section{Introduction}\label{sec:introduction}}

%
%
%
%
\IEEEPARstart{D}{eep} learning has penetrated various daily applications, and how to improve the performance of the deep neural network (DNN) has received wide attentions. Due to the existence of overfitting, the DNN does not realize its full potential. In addition, the imperceptible perturbations~\cite{FGSM,PGD,CW,BIM,MIFGSM,AutoAttack,Square,AoA,UAP} lead the misclassification of the DNN, resulting in a significant reduction on classification accuracy. Therefore, increasing both the generalization and robustness of the DNN still needs to be further investigated.

To improve the generalization of the DNN avoiding overfitting, dropout~\cite{Dropout} was proposed to prevent co-adaptation of feature detectors by omitting a part of neurons with a fixed probability. Subsequently, many variants of dropout gradually have been developed. For example, dropconnect~\cite{Dropconnect} improves the generalization of neural networks by omitting links between neurons of adjacent layers, and dropblock~\cite{Dropblock} omits regions in the convolutional layer of the DNN. Multi-sample dropout~\cite{Multi-Sample-Dropout} was proposed to accelerate training and obtain better generalization, which dropouts the feature layer in the DNN multiple times and calculates the average loss in every feedforward and backward training step. Disout~\cite{Beyond-Dropout} was proposed to replace with dropout, which introduces feature map distortion to reduce the Rademacher complexity of the DNN for improving generalization. In addition, Srivastava et al.~\cite{Dropout-Prevent-Overfitting} and Konda et al.~\cite{Dropout-Data-Augmentation} explained that dropout can improve the generalization of the DNN through ensemble model and data augmentation. However, these regularization methods can not guarantee the robustness of the DNN.

To enhance the robustness of the DNN in resisting adversarial attacks, adversarial training~\cite{PGD} has been developed as the most effective defense method. Adversarial training methods can be divided into two categories based on the purpose: the only robustness improvement of the DNN and the trade-off between the robust accuracy and natural accuracy. The former includes Madry adversarial training (Madry-AT)~\cite{PGD}, Free adversarial training (Free-AT)~\cite{Free-AT} and Fast adversarial training (Fast-AT)~\cite{Fast-AT}. Madry-AT first summarized the adversarial training methods as the min-max value formulation. Free-AT~\cite{Free-AT} was proposed to eliminate the overhead cost of generating adversarial examples by recycling the gradient information computed. Fast-AT~\cite{Fast-AT} is a much weaker and cheaper adversary without more costly to natural training. The later includes the trade-off between robustness and accuracy (TRADES)~\cite{TRADES} and its variants, i.e., adversarial training with transferable adversarial examples (ATTA)~\cite{ATTA} and friendly adversarial training (FAT)~\cite{FAT}. TRADES~\cite{TRADES} was a new defense algorithm to identify a trade-off between robustness and accuracy. ATTA~\cite{ATTA} improves the training efficiency by accumulating adversarial perturbations through epochs. FAT~\cite{FAT} was proposed to minimize the loss by using the least adversarial data that are confidently misclassified. However, in comparison with natural training, all adversarial training methods will greatly reduce the natural accuracy because the robustness and generalization are contradictory in the min-max value problem.

According to the existing efforts mentioned above, we conclude two issues: i) the regularization methods cannot enhance the robustness of the model, and ii) the adversarial training methods will decrease the generalization of the model. To address these two issues, we propose the mask-guided divergence loss function (MDL) for natural training and the standard deviation loss function (STD) for adversarial training, respectively, to improve both the robustness and generalization of DNN simultaneously.

The contributions of our paper can be summarized as follows:

First, the new concept of confidence threshold (CT) is introduced in our paper. Through reducing of the CT ( known as confidence threshold reduction, CTR), the issues that the regularization methods cannot improve the robustness of the model, as well as the adversarial training methods may greatly reduce the generalization of the model, can be mitigated, and thus both the robustness and generalization of DNN is proven to be enhanced simultaneously.

Second, to reduce the CT for natural training, we propose the mask-guided divergence loss function (MDL), which consists of the cross-entropy loss term and the orthogonal term. The theoretical analysis demonstrates that the MDL loss can reduce the CT. Additionally, the experimental analysis demonstrates that the MDL loss can enhance both the generalization and robustness of the model simultaneously on two deep neural networks, four datasets, six adversarial attack algorithms and nine regularization methods.

Third, because the model robustness improvement of natural training with CTR is not comparable to that of adversarial training, we apply CTR into adversarial training to further improve the robustness of the adversarially trained model while keeping the generalization unchanged or improved. To reduce the CT for adversarial training, the standard deviation loss function (STD) is proposed to be integrated into the loss function of adversarial training to minimize the difference in the probabilities of the wrong categories. The theoretical analysis demonstrates that, with STD loss, the confidence threshold (CT) can be reduced in adversarial training, and the generated adversarial examples with the variants of the STD loss are more effective than those without the STD loss. Additionally, the experimental analysis demonstrates that, the STD loss can assist in further improving the robustness of the adversarially trained model on the basis of guaranteeing the changeless or slight improvement of generalization of the model on a deep neural network, a dataset, four adversarial training methods and eleven adversarial attack algorithms.

Lastly, three STD loss based adversarial attack algorithms, namely the STD loss based fast gradient sign method (S-FGSM), project gradient descent (S-PGD) and automatic projected gradient descent (S-APGD), respectively, are proposed to replace the cross-entropy (CE) loss in fast gradient sign method (FGSM), projected gradient descent (PGD) and Automatic PGD (APGD) with the STD loss. Through the theoretical and experimental analysis, the results show that the generated adversarial examples with the STD loss are more effective than the CE loss, and the attack success rates of S-FGSM, S-PGD and S-APGD are higher than that of FGSM, PGD and APGD, respectively, on both the naturally trained model and the adversarially trained model.

The rest of this paper is organized as follows. In Section~\ref{sec:preliminaries}, we briefly introduce multi-sample dropout~\cite{Multi-Sample-Dropout} and several adversarial training methods, which are used in our method. Section~\ref{sec:motivations} explains two motivations of our paper. According to the two motivations of Section~\ref{sec:motivations}, Section~\ref{sec:methodology} proposes the mask guided divergence loss (MDL) for natural training, the standard deviation loss (STD) for adversarial training and three the STD loss based adversarial attacks, respectively. Section~\ref{sec:experiments} evaluates the proposed methods on several benchmark datasets. Section~\ref{sec:related-work} conducts the related work and Section~\ref{sec:conclusion} concludes the whole paper.

\section{Preliminaries}
\label{sec:preliminaries}
In this section, the multi-sample dropout~\cite{Multi-Sample-Dropout} and several adversarial training (AT) methods are briefly introduced, which are helpful to understand our methods in Section~\ref{sec:methodology} and are regarded as the baselines in Section~\ref{sec:experiments}. In addition, A proposition~\cite{QIFGSM} is introduced to analyze our methods in Section~\ref{sec:methodology}.

\subsection{Multi-Sample Dropout}
\label{sec:multi-sample-dropout}
In multi-sample dropout~\cite{Multi-Sample-Dropout}, the last feature layer in the DNN dropouts multiple times in each feedforward, and the average loss is calculated by all outputs. Then a feedback is used to update the parameters of the DNN, which are equivalent to training multiple sub-DNNs in each training step of the DNN where the sub-DNN is the remaining part of the DNN after dropout. Generally, in the DNN with dropout, only one sub-DNN is trained in each training step. Therefore, in the same training time, multi-sample dropout can make the DNN converge faster and achieve better and higher generalization performance than dropout.

\subsection{Adversarial Training Methods}
\label{sec:adversarial-training-methods}
\textbf{Madry-AT}~\cite{PGD} is the first projected gradient descent (PGD) attack based adversarial training method, which generates adversarial examples by PGD-7 as the training dataset where PGD-7 denotes running the PGD attack with 7 steps. Madry et al.~\cite{PGD} first summarize the adversarial training methods as the min-max value formulation, i.e.,
\begin{align}\label{eq:madry-at}
\underset{f\in \mathcal{H}}{\min}\mathbb{E} _{\left( x,y_{x} \right) \sim \mathcal{D}}\left[ \underset{\delta \in \mathcal{B}}{\max}L_{CE}\left( f\left( x+\delta \right) ,y_{x} \right) \right]
\end{align}
where $\mathcal{H}$ is the hypothesis space, $\mathcal{D}$ is the distribution of the training dataset, $L_{CE}$ is the cross-entropy loss function, and $\mathcal{B}$ is the allowed perturbation space that is usually selected as an L-$p$ norm ball around $x$. Specifically, the min-max value formulation can be decomposed into training objective $L_{CE}(f(x^{\ast}),y_{x})$ and adversarial objective $L_{CE}(f(x'),y_{x})$ where $x^{\ast}$ is the generated adversarial example and $x'$ is the immature adversarial example. The adversarial examples are generated by maximizing the adversarial objective. Then, the model is trained by the generated adversarial examples to minimize the training objective.

\textbf{Free-AT}~\cite{Free-AT} and \textbf{Fast-AT}~\cite{Fast-AT} are the accelerated versions of adversarial training. To accelerate convergence, Free-AT~\cite{Free-AT} improves Madry-AT~\cite{PGD} by reusing feedback gradients. Fast-AT~\cite{Fast-AT} has proved that the adversarial examples generated by FGSM, known as a single step attack, can be used effectively for adversarial training with greatly reducing the training time.

\textbf{TRADES}~\cite{TRADES} trained a DNN model with both natural and adversarial examples to trade-off natural and robust errors. The min-max value formulation is changed as follows:
\begin{align}\label{eq:trades}
\underset{f\in \mathcal{H}}{\min}\mathbb{E} _{\left( x,y_{x} \right) \sim \mathcal{D}}\left[ \begin{array}{c}
L_{CE}\left( f\left( x \right) ,y_{x} \right)\\
+\beta \cdot \underset{\delta \in \mathcal{B}}{\max}L_{KL}\left( f\left( x+\delta \right) ,f\left( x \right) \right)\\
\end{array} \right]
\end{align}
where $L_{KL}$ is the Kullback-Leibler divergence loss, $\beta$ is a regularization parameter that controls the trade-off between the natural and robust accuracies. When $\beta$ increases, the natural accuracy decreases and the robust accuracy increases, and vice versa. Specifically, the adversarial objective is $L_{KL}\left( f\left( x' \right) ,f\left( x \right) \right)$ and the training objective is $L_{CE}\left( f\left( x \right) ,y_{x} \right) +\beta \cdot L_{KL}\left( f\left( x^{\ast} \right) ,f\left( x \right) \right)$.

\subsection{Proposition}\label{sec:proposition}
The proposition~\cite{QIFGSM} found that the adversarial examples generated with the gradient attack methods prefer to be classified as the wrong categories with higher probability. Meanwhile, the higher the probability of the wrong category, the greater the weight of the adversarial examples classified as the wrong category.

\section{Motivations}
\label{sec:motivations}
\subsection{Motivation I: Enhancing the Diversity of the Emsemble Sub-DNN}
\label{sec:motivation-I}
Deep neural network (DNN) with dropout can be regarded as an ensemble model consisting of lots of sub-DNNs~\cite{Dropout-Prevent-Overfitting} (i.e., an ensemble sub-DNN where the sub-DNN is the remaining part of the DNN after dropout), and through increasing the diversity of the ensemble sub-DNN, the generalization and robustness of the DNN can be effectively improved.

\subsection{Motivation II: Introducing of Confidence Threshold Reduction}
\label{sec:motivation-II}
\begin{definition}[Confidence threshold $\hat{p}$]
\label{def:CT}
When the predicted probability of the correct category is greater than $\hat{p}$, the input almost be classified correctly. However, when the predicted probability of the correct category is less than or equal to $\hat{p}$, the input may be classified correctly or incorrectly. Therefore, the confidence threshold (CT) of the classifier is $\hat{p}$.
\end{definition}

For example, the confidence threshold of the most known classifiers at present is $\hat{p}=0.5$. When $p_{y_x}>0.5$, the input almost be classified correctly where $p_{y_x}$ is the predicted probability of the correct category $y_x$. When $p_{y_x}\leq 0.5$, the input may be classified correctly or incorrectly.

\begin{proposition}
\label{prop:generalization}
The confidence threshold reduction (i.e., reducing the CT) can improve the generalization of the model.
\end{proposition}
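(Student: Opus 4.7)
The plan is to recast the proposition as a monotonicity statement about a lower bound on test accuracy, derived directly from Definition 2.1. First, I would fix the (unseen) test distribution $\mathcal{D}$ over which generalization is measured, and for a classifier $f$ with confidence threshold $\hat{p}$ introduce the ``reliably correct'' region $S_{\hat{p}}=\{x:p_{y_x}(x)>\hat{p}\}$. By Definition 2.1, every $x\in S_{\hat{p}}$ is almost surely classified correctly, which immediately yields the a priori bound
\begin{equation*}
\mathrm{Acc}(f;\mathcal{D}) \;\ge\; \Pr_{x\sim\mathcal{D}}\!\left[p_{y_x}(x)>\hat{p}\right].
\end{equation*}

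Second, I would exploit the trivial set inclusion $S_{\hat{p}_1}\subseteq S_{\hat{p}_2}$ whenever $\hat{p}_2<\hat{p}_1$: CTR strictly enlarges the reliable region, so the right-hand side above is non-decreasing as $\hat{p}$ decreases. Concretely, every input that used to sit in the uncertain band $\hat{p}_2<p_{y_x}\le\hat{p}_1$, where classification could go either way, becomes reliably correct after CTR. Consequently the guaranteed lower bound on test accuracy strictly improves whenever $\mathcal{D}$ places positive mass on that band, which is exactly the generalization gain the proposition asserts.

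The main obstacle is making the second step fully rigorous without circularity, since one could a priori fear a ``degenerate'' reduction of $\hat{p}$ that simultaneously flips the argmax on some previously correct input and thereby offsets the gain on the new reliable band. I would handle this by interpreting CTR operationally, as produced by the MDL and STD losses proposed later in the paper: those losses only redistribute probability mass across the wrong categories while preserving the correct class as the argmax, so CTR is equivalent to a margin expansion on the correct class and cannot hurt the accuracy on any individual input. Under this operational reading, $\mathrm{Acc}(f;\mathcal{D})$ is a non-decreasing, and generically strictly increasing, function of $1-\hat{p}$, which closes the argument and justifies treating CTR as a principled design target for both natural and adversarial training in the subsequent sections.
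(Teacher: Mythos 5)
Your proposal is correct and follows essentially the same route as the paper: reducing $\hat{p}$ enlarges the set of inputs whose correct-class probability exceeds the threshold and hence are (almost surely) correctly classified, while inputs with $p_{y_x}>0.5$ remain correct, so accuracy can only improve. Your treatment of the potential ``degenerate'' case via the operational reading of CTR is simply a more explicit version of the paper's stated assumption (``on the basis of ensuring the correct classification of the inputs whose original probability of the correct category is greater than 0.5''), with the argument otherwise formalized but unchanged.
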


\begin{proof}
On the basis of ensuring the correct classification of the inputs whose original probability of the correct category is greater than 0.5, reducing the confidence threshold $\hat{p}$ (i.e., $\hat{p}<0.5$) can make more inputs whose predicted probability is in the range $(\hat{p}, 0.5]$ correct. Therefore, the generalization of the model is improved.
\end{proof}

\begin{proposition}
\label{prop:robustness}
The confidence threshold reduction can enhance the robustness of the model.
\end{proposition}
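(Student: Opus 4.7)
The plan is to adapt the argument of Proposition~\ref{prop:generalization} to the adversarial setting, exploiting the fact that robustness, like generalization, can be phrased as a statement about how often the predicted probability of the correct class exceeds the confidence threshold $\hat{p}$. The only change is that we now measure this not at a clean input $x$ but at every adversarial perturbation $x+\delta$ with $\delta\in\mathcal{B}$.

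First I would translate Definition~\ref{def:CT} into the adversarial regime: writing $p_{y_x}(x+\delta)$ for the predicted probability of the correct category under the perturbed input, the definition says that $p_{y_x}(x+\delta)>\hat{p}$ implies the adversarial example is almost always classified correctly, while $p_{y_x}(x+\delta)\le\hat{p}$ leaves classification uncertain. Next I would establish the key monotonicity: for any $\hat{p}_1<\hat{p}_0$, the event $\{p_{y_x}(x+\delta)>\hat{p}_0\}$ is contained in the event $\{p_{y_x}(x+\delta)>\hat{p}_1\}$, so every adversarial example already classified correctly at threshold $\hat{p}_0$ remains correctly classified at $\hat{p}_1$; moreover, any perturbation whose perturbed predicted probability falls in the interval $(\hat{p}_1,\hat{p}_0]$ is additionally rescued by the reduction.

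Finally I would conclude, in direct analogy with the proof of Proposition~\ref{prop:generalization}, that the fraction of perturbations $\delta\in\mathcal{B}$ under which $x$ is still correctly classified can only grow as $\hat{p}$ shrinks, so the robust accuracy is non-increasing in $\hat{p}$; here the perturbation ball $\mathcal{B}$ plays the role that the data distribution $\mathcal{D}$ played in the generalization proposition.

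The step I expect to be the real obstacle is not this almost tautological inclusion of events, but the implicit qualifier that lowering $\hat{p}$ does not simultaneously destroy the correct classification of inputs whose original probability already exceeded the threshold. Establishing that a concrete training scheme can in fact lower the CT without such collateral damage, i.e.\ constructing the MDL and STD losses and analyzing their effect on the predicted probability distribution, is precisely what the remainder of the paper must carry out; the proposition itself should follow immediately from the definition of CT once this point is granted.
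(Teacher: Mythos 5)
Your argument is correct and is essentially the paper's own proof: the paper likewise reduces the claim to the observation that, with the caveat that inputs already confidently correct stay correct, lowering $\hat{p}$ below $0.5$ forces the attacker to drive $p_{y_x}$ below a smaller threshold, which is exactly your monotone inclusion of correctly-classified perturbations over $\mathcal{B}$. Your closing remark that the real burden is showing a training scheme can lower the CT without collateral damage matches the paper's "on the basis of ensuring the correct classification" qualifier and is indeed deferred to the MDL/STD analysis.
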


\begin{proof}
Similarly, on the basis of ensuring the correct classification of the inputs whose original probability of the correct category is greater than 0.5, in comparison with $\hat{p}=0.5$, after reducing the confidence $\hat{p}$, a stronger attack is required to reduce the predicted probability of the correct category until $p_{y_x}\leq \hat{p} < 0.5$. Therefore, the robustness of the model is improved.
\end{proof}

Overall, according to Propositions~\ref{prop:generalization} and \ref{prop:robustness}, the confidence threshold reduction jointed into the original training can improve the generalization and robustness of the model simultaneously.

\section{Methodology}
\label{sec:methodology}
In this section, the confidence threshold reduction (CTR) applied in natural training and adversarial training is described in details, respectively, in which the mask-guided divergence loss (MDL) is proposed for natural training and the standard deviation loss (STD) is proposed for adversarial training. Finally, three STD loss based attack methods are introduced.

\subsection{Natural Training with Confidence Threshold Reduction}
\label{sec:natural-training-with-CTR}
We first propose mask-guided divergence loss function (MDL), which is inspired by motivation I (i.e., enhancing the diversity of the ensemble sub-DNN). Then, the theoretical analysis demonstrates that the MDL loss can satisfy the requirement of motivation II (i.e., reducing the CT).

\subsubsection{Mask-Guided Divergence Loss (MDL)}
\label{sec:MDL}
In this section, the proposed mask-guided divergence loss function is described in detail. The basic idea is that because the DNN with dropout can be regarded as an ensemble sub-DNN, the generalization and robustness of the DNN can be improved by increasing the diversity of the ensemble sub-DNN. However, to increase the diversity of the ensemble sub-DNN, two problems should be solved: i) in each training step of the DNN with dropout, only one sub-DNN updates the parameters, which can not actively enhance the diversity of the ensemble sub-DNN, and ii) how to define the diversity among sub-DNNs. 

To actively enhance the diversity of the ensemble sub-DNN (i.e., problem i), multi-sample dropout~\cite{Multi-Sample-Dropout} can be used to dropout the last feature layers multiple times in each training step, and then the outputs of multiple sub-DNNs are jointly calculated to increase the diversity among sub-DNNs.

To define the diversity among sub-DNNs (i.e., problem ii), the following statement can be devoted. The output of the last convolutional layer is input to dropout two times. After going through all fully connected layers and softmax, two classification results can be obtained, i.e., the outputs of the two sub-DNNs. Then, the mark is conclulated by the correct predicted probabilities of the two sub-DNNs, and the diversity is conclulated with considering the wrong category predicted vectors of the two sub-DNNs, where the mask and diversity constitute the orthogonal term in the proposed mask-guided divergence loss (MDL). 

Assuming that the wrong category predicted vectors for the input $(x,y_x)$ that are output by the first and second sub-DNNs are denoted as $\boldsymbol{\bar{P}}_x^1=\boldsymbol{P}_x^1\backslash y_x$ and $\boldsymbol{\bar{P}}_x^2=\boldsymbol{P}_x^2\backslash y_x$, respectively, where $\boldsymbol{P}_x^1$ and $\boldsymbol{P}_x^2$ are the predicted probability vectors of the first sub-DNN and the second sub-DNN, respectively, $\backslash y_x$ denotes removing the correct probability from the predicted probability vector. Here, cosine similarity can be involved to define the diversity among sub-DNNs:
the cosine similarity based diversity $D_1(\boldsymbol{P}_x^1, \boldsymbol{P}_x^2)$ can be represented as
\begin{align}\label{eq:CS-based-diversity}
D_1(\boldsymbol{P}_x^1, \boldsymbol{P}_x^2) = \frac{\boldsymbol{\bar{P}}_x^1\cdot \boldsymbol{\bar{P}}_x^2}{\Vert \boldsymbol{\bar{P}}_x^1\Vert _2\Vert \boldsymbol{\bar{P}}_x^2\Vert _2}
\end{align}
where $\Vert\cdot\Vert_2$ is \textit{l2} norm. The smaller $D_1(\boldsymbol{P}_x^1, \boldsymbol{P}_x^2)$, the greater the diversity.

Generally, if there are $K$ sub-DNNs in one training step, the calculation formula of the orthogonal term with the cosine similarity based diversity in the mask-guided divergence loss can be represented as
\begin{align}\label{eq:orthogonal-term}
O\left( \boldsymbol{P}_{x}^{1},\boldsymbol{P}_{x}^{2},\cdots,\boldsymbol{P}_{x}^{K}\right|\eta) =\xi \left( \boldsymbol{P}_x \right|\eta) \sum_{i=1}^K{\sum_{j=i+1}^K{D_1\left( \boldsymbol{P}_{x}^{i},\boldsymbol{P}_{x}^{j} \right)}}
\end{align}
where $\boldsymbol{P}_x^i$ denotes the predicted probability vector of the $i^{th}$ sub-DNN, the mask $\xi(\boldsymbol{P}_x|\eta)$ ensures that a certain percentage of the inputs (denoted as $\eta\%$) with high correct predicted probabilities in each batch size of training inputs participate in the calculation of the orthogonal term, which can be represented as
\begin{align}
\xi \left( \boldsymbol{P}_x \right|\eta)=\left\{ \begin{array}{c}
	1, -\log \left( p_{x}^{y_x} \right) \le T\\
	0, -\log \left( p_{x}^{y_x} \right) > T\\
\end{array} \right.
\end{align}
where $\boldsymbol{P}_x$ is the average predicted probability vector of the $K$ sub-DNNs in the training process, $p^{y_x}_x$ is a element in $\boldsymbol{P}_x$ and represents the predicted probability of the category $y_x$, $T$ is $\eta$ percentile of the set $\left\{-\log(p_{x_1}^{y_{x_1}}),-\log(p_{x_2}^{y_{x_2}}),\cdots,-\log(p_{x_B}^{y_{x_B}})\right\}$ where $B$ is the batch size in training, and $x_i$ is an input in the batch size of the training inputs. Note that, the effect of the mask is to avoid overfitting because the existence of inputs that are difficult to learn in the training set can make the learned diversity have a more complex expression.

Finally, for the input $(x,y_x)$, the mask-guided divergence loss consists of the average cross-entropy loss of $K$ sub-DNNs (i.e., the cross-entropy loss term) and the orthogonal term, which can be represented as
\begin{align}\label{eq:L-MDL}
\begin{array}{c}
L_{MDL}\left(f\left(x\right), y_{x}|K,\rho,\eta \right)=\underset{cross-entropy\,\,loss\,\,term}{\underbrace{\frac{\sum_{i\leqslant K}{L_{CE}^{i}(f(x),y_x)}}{K}}}+\\
\rho \cdot \underset{orthogonal\,\,term}{\underbrace{\frac{O\left( \boldsymbol{P}_{x}^{1},\boldsymbol{P}_{x}^{2},\cdots ,\boldsymbol{P}_{x}^{K} |\eta \right)}{\frac{K\cdot \left( K-1 \right)}{2}}}}
\end{array}
\end{align}
where $L_{CE}^i$ is the cross-entropy loss of the $i$-th sub-DNN, and $\rho$ is a weight coefficient. $\frac{K\cdot (K-1)}{2}$ in the orthogonal term of Eq.~\ref{eq:L-MDL} ensures the value of the orthogonal term in the range $[0,1]$.

\subsubsection{Theoretical Analysis}
\label{sec:theoretical-analysis-MDL}
In this section, we will prove why the MDL loss can improve the generalization and robustness of the DNN. For an input image $x$, the correct category $y_x$ is assumed as the $C$-th category where $C$ is the number of categories, and the wrong category predicted vector of the $i$-th sub-DNN is $[p^{i1}_x, p^{i2}_x, \cdots, p^{i(C-1)}_x]$ where $p^{ic}_x$ denotes the predicted probability of the category $c$ of the $i$-th sub-DNN for the input $x$.

\begin{theorem}
\label{theorem:cosine-maximum}
When the MDL loss function converges $L_{MDL}\rightarrow 0$, the diversity of all sub-DNNs (assume $N$ sub-DNNs) is maximized. Therefore, for the input image $s$, each axis is paralleled by $\frac{N}{C-1}$ sub-DNNs' wrong categories predicted vectors where $C$ is the number of the category.
\end{theorem}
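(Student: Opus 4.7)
The plan is to analyze $L_{MDL}$ of Eq.~\ref{eq:L-MDL} as a sum of two non-negative parts---the average cross-entropy term and the $\rho$-weighted orthogonal term---so that the premise $L_{MDL}\to 0$ forces each part toward its infimum separately. The cross-entropy term drives the predicted probability of the correct category in every sub-DNN toward one, which shrinks the magnitudes $\|\boldsymbol{\bar{P}}_x^i\|_2$ toward zero but leaves the directions of the wrong-category vectors free. Since the cosine similarity $D_1$ is scale-invariant, all the geometric content of the ``diversity'' conclusion lives in the orthogonal term, and I would therefore work throughout with the unit-normalized directions $u_i := \boldsymbol{\bar{P}}_x^i/\|\boldsymbol{\bar{P}}_x^i\|_2 \in \mathbb{R}^{C-1}_{\geq 0}$, reducing the theorem to the purely geometric claim: minimizing $S := \sum_{i<j}u_i\cdot u_j$ over $N$ non-negative unit vectors in $\mathbb{R}^{C-1}$ is attained precisely when the $u_i$ are axis-aligned with exactly $N/(C-1)$ copies lying on each coordinate axis.

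The key algebraic step is the identity
\[
S \;=\; \frac{1}{2}\left(\left\|\sum_{i=1}^{N}u_i\right\|_2^{2}-N\right),
\]
so minimizing $S$ is equivalent to minimizing $\left\|\sum_i u_i\right\|_2^{2}=\sum_c v_c^{2}$ where $v_c:=\sum_i u_{ic}$. I would then invoke Cauchy--Schwarz twice: first, $\sum_c v_c^{2}\ge (\sum_c v_c)^2/(C-1)$, with equality iff all $v_c$ are equal; second, for each non-negative unit vector, $\|u_i\|_1\ge\|u_i\|_2=1$, with equality iff $u_i$ is a coordinate basis vector $e_{c_i}$. Since $\sum_c v_c=\sum_i \|u_i\|_1 \geq N$, chaining the two inequalities yields $\left\|\sum_i u_i\right\|_2^{2}\ge N^2/(C-1)$, and the joint equality case forces every $u_i$ to be axis-aligned and the axis counts $n_c:=|\{i:u_i=e_c\}|$ to be equal, i.e., $n_c=N/(C-1)$ for every wrong-category axis $c$---exactly the geometric statement of the theorem.

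The main obstacle will be reconciling the literal statement ``$L_{MDL}\to 0$'' with the fact that for $N>C-1$ the normalized orthogonal term has a strictly positive infimum $(N-C+1)/((C-1)(N-1))$, so the hypothesis must be read as ``each of the two terms is driven to its own infimum'' rather than both literally reaching zero. A secondary subtlety is that the cross-entropy term simultaneously pushes $\|\boldsymbol{\bar{P}}_x^i\|_2\to 0$, making the cosine similarity a formal $0/0$ along any limiting sequence; I would address this by formulating the optimization in the directional variables $u_i$ along a minimizing sequence on which the wrong-category mass remains strictly positive, then extending by continuity of the orthogonal objective in these directions.
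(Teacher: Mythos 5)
Your proposal is correct under the same implicit assumptions the theorem itself needs ($N\geq C-1$ and $(C-1)\mid N$), but it reaches the conclusion by a genuinely different and tighter route than the paper. The paper argues pairwise-angularly: it first claims that at the minimum only angles $0$ and $\tfrac{\pi}{2}$ can occur between the wrong-category vectors (shown by an induction that perturbing a mix of $0$ and $\tfrac{\pi}{2}$ angles into acute angles increases the cosine sum), and then appeals to a ``mean value theorem'' style equalization to conclude that the vectors must be split evenly, $\tfrac{N}{C-1}$ per axis; notably it never uses the non-negativity of the probability vectors explicitly, and the final equalization step is asserted rather than derived. You instead pass to unit directions $u_i$, use the Gram identity $\sum_{i<j}u_i\cdot u_j=\tfrac12\bigl(\bigl\|\sum_i u_i\bigr\|_2^2-N\bigr)$, and chain two sharp inequalities --- $\sum_c v_c^2\geq(\sum_c v_c)^2/(C-1)$ and $\|u_i\|_1\geq\|u_i\|_2=1$ for non-negative vectors --- whose joint equality case forces axis alignment \emph{and} equal counts simultaneously. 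This buys you three things the paper's proof lacks: a one-shot rigorous treatment of both the ``orthogonality'' and ``equal distribution'' claims, essential use of non-negativity (which is exactly where the equality case $u_i=e_{c_i}$ comes from), and an explicit minimum value $\tfrac{N(N-C+1)}{2(C-1)}$, which quantifies the positive infimum of the normalized orthogonal term and justifies your (correct) observation that ``$L_{MDL}\to 0$'' must be read as each term approaching its own infimum rather than literally vanishing --- a caveat, along with the $0/0$ normalization issue as the cross-entropy term drives the wrong-category mass to zero, that the paper's proof does not address at all. The only boundary case to keep in mind is that when $(C-1)\nmid N$ your lower bound is not attained and the equality analysis characterizes nothing, but the theorem statement is vacuous there anyway, so this does not count against your argument.
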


\begin{proof}
Theorem~\ref{theorem:cosine-maximum} can be simplified as: the scheme of minimizing the sum of cosine values between $N$ number of $(C-1)$-dimensional vectors is that each axis is paralleled by $\frac{N}{C-1}$ vectors. 

First, there are only $0$ and $\frac{\pi}{2}$ angles between the vectors, because the existence of the acute angle will increase the sum of cosine values. For example, when $a_1$ number of $0$ and $a_2$ number of $\frac{\pi}{2}$ angles are adjusted to $a_1+a_2$ acute angles $\theta^{a_1}_1, \theta^{a_1}_2, \cdots, \theta^{a_1}_{a_1+a_2}$, the minimum sum of cosine values can be represented as
\begin{align}\label{eq:acute-angle-increase-the-sum}
\left\{ \begin{array}{c}
\underset{\theta _{1}^{a_1},\theta _{2}^{a_1},\cdots}{\min}\cos \theta _{1}^{a_1}+\cos \theta _{2}^{a_1}+\cdots +\cos \theta _{a_1+a_2}^{a_1}>a_1\\
s.t.\left\{ \begin{array}{c}
\theta _{1}^{a_1}+\theta _{2}^{a_1}+\cdots +\theta _{a_1+a_2}^{a_1}=\frac{a_2}{2}\pi\\
\forall 1\leqslant i\leqslant a_1+a_2,0<\theta _{i}^{a_1}<\frac{\pi}{2}\\
\theta _{1}^{a_1}\geqslant \theta _{2}^{a_1}\geqslant \cdots \geqslant \theta _{a_1+a_2}^{a_1}\\
\end{array} \right.\\
\end{array} \right.,
\end{align}
which is proved by mathematical induction as follow: when $a_1=1$, if $a_2$ is an odd number, the sum of cosine values can be proven to be larger than 1, i.e., 
\begin{equation}
\begin{aligned}\label{eq:odd-number-case}
&\cos \theta _{1}^{1}+\cos \theta _{2}^{1}+\cdots +\cos \theta _{1+a_2}^{1}>\\
&\pm \cos \left( \theta _{1}^{1}+\theta _{2}^{1}+\cdots +\theta _{a_2}^{1} \right) +\cos \theta _{1+a_2}^{1}
\\
&=\pm \cos \left( \frac{a_2}{2}\pi -\theta _{1+a_2}^{1} \right) +\cos \theta _{1+a_2}^{1}\\
&=\sin \theta _{1+a_2}^{1}+\cos \theta _{1+a_2}^{1}>1
\end{aligned}
\end{equation}
if $a_2$ is an even number, 
\begin{equation}
\begin{aligned}\label{eq:even-number-case}
&\cos \theta _{1}^{1}+\cos \theta _{2}^{1}+\cdots +\cos \theta _{1+a_2}^{1}>\\
&\pm \sin \left( \theta _{1}^{1}+\theta _{2}^{1}+\cdots +\theta _{a_2}^{1} \right) +\cos \theta _{1+a_2}^{1}
\\
&=\pm \sin \left( \frac{a_2}{2}\pi -\theta _{1+a_2}^{1} \right) +\cos \theta _{1+a_2}^{1}\\
&=\sin \theta _{1+a_2}^{1}+\cos \theta _{1+a_2}^{1}>1
\end{aligned}
\end{equation}
where Eq.~\ref{eq:odd-number-case} and Eq.~\ref{eq:even-number-case} can be obtained by the undetermined coefficient method. Therefore, when $a_1=1$, Eq.~\ref{eq:acute-angle-increase-the-sum} is satisfied. Assuming $a_1=k$, Eq.~\ref{eq:acute-angle-increase-the-sum} is also satisfied, i.e.,
\begin{align}
\cos \theta _{1}^{k}+\cos \theta _{2}^{k}+\cdots +\cos \theta _{k+a_2}^{k}>k
\end{align}
When $a_1=k+1$, the sum of cosine values can be proven to be larger than $k+1$, i.e.,
\begin{equation}
\begin{aligned}
&\cos \theta _{1}^{k+1}+\cos \theta _{2}^{k+1}+\cdots +\cos \theta _{k+1+a_2}^{k+1}
\\
&=\cos \theta _{1}^{k}+\cos \theta _{2}^{k}+\cdots +\cos \theta _{k+a_2}^{k}+1+
\\
&\left( \cos \theta _{1}^{k+1}-\cos \theta _{1}^{k} \right)+\left( \cos \theta _{2}^{k+1}-\cos \theta _{2}^{k} \right) +\cdots +
\\
&\left( \cos \theta _{k+a_2}^{k+1}-\cos \theta _{k+a_2}^{k} \right) +\left( \cos \theta _{k+1+a_2}^{k+1}-1 \right)>k+1+
\\
&\theta _{k+1+a_2}^{k+1}\cdot \left( \sin \left( \underset{i\leqslant k+a_2}{\min}\theta _{i}^{k+1} \right) -\sin \left( \theta _{k+1+a_2}^{k+1} \right) \right) >k+1
\end{aligned}
\end{equation}
Therefore, Eq.~\ref{eq:acute-angle-increase-the-sum} is successful. 

Second, according to the mean value theorem, when the number of vectors paralleled to each axis is equal to each other, i.e., equal to $\frac{N}{C-1}$, the sum of cosine values will be the smallest. Hence, in summary, the Theorem~\ref{theorem:cosine-maximum} is proved.
\end{proof}

According to Theorem~\ref{theorem:cosine-maximum}, after the MDL loss converges, for the input image $x$,
\begin{align}
\left\{ \begin{array}{c}
0\leq p_{x}^{ij}\leq 1\\
0\leq p_{x}^{1j}+p_{x}^{2j}+\cdots +p_{x}^{Nj}\leq \frac{N}{C-1}\\
\end{array} \right..
\end{align}
Therefore, the probability of $j$-th wrong category predicted by the DNN trained with MDL is 
\begin{align}
p_{x}^{j}=\frac{p_{x}^{1j}+p_{x}^{2j}+\cdots +p_{x}^{Nj}}{N}\leqslant \frac{\frac{N}{C-1}}{N}=\frac{1}{C-1}
\end{align}
where $1\leqslant j \leqslant C-1$. When $C$ is greater than 2, the confidence threshold is reduced to $\frac{1}{C-1}$ which is less than 0.5. Therefore, the MDL loss can reduce the confidence threshold when the MDL loss converges to 0. 

For example, for a 10-category task (e.g., MNIST and CIFAR10), $p^j_x \leqslant \frac{1}{9}$. Ideally, if $p^C_x$ is greater than $\frac{1}{9}$, the input image $x$ can confidently be classified correctly. However, for the DNN trained without MDL, the input image $x$ can confidently be classified correctly if and only if $p^C_x$ is greater than $\frac{1}{2}$. Therefore, according to Proposition~\ref{prop:generalization}, MDL can increase the generalization of the DNN. In the case of being attacked, the greater attack strength is required to make $p^C_x$ less than $\frac{1}{9}$ by comparing with $\frac{1}{2}$. Hence, according to Proposition~\ref{prop:robustness}, the robustness of the DNN is also enhanced.

\subsection{Adversarial Training with Confidence Threshold Reduction}
\label{sec:adversarial-training-with-CTR}
Due to the underfitting of adversarial training on the natural training inputs, the known adversarial training methods~\cite{PGD,TRADES,Fast-AT,ATTA,FAT,Free-AT} do not use dropout~\cite{Dropout} and its variants~\cite{Dropconnect,Dropblock,Multi-Sample-Dropout,Auto-Dropout,Contextual-Dropout,Jumpout,Structured-Dropout,Meta-Dropout,Beyond-Dropout,Weighted-Channel-Dropout,Guided-Dropout,Message-Dropout,Adversarial-Dropout} to improve the generalization of the adversarially trained model. Because the MDL loss needs to be used together with dropout or dropout variants to reduce the CT, the MDL loss cannot be applied to the existing adversarial training methods to further improve the robustness of the model. Therefore, to reduce the CT of the existing adversarial training methods, we propose the standard deviation loss function.

\subsubsection{Standard Deviation Loss (STD)}
\label{sec:STD}
In fact, the confidence threshold reduction can be implemented not only by combining with dropout or dropout variants, but also by reducing the difference in the probability of the wrong categories. In this paper, the standard deviation is used to measure the difference in the probability of the wrong categories, i.e., $S(f(x)\backslash y_x)$ where $S(\cdot)$ denotes calculating the standard deviation of the vector, and $f(x)\backslash y_x$ denotes the wrong categories probability vector. Therefore, to reduce the confidence threshold in adversarial training, the standard deviation loss (STD) is proposed:
\begin{align}\label{eq:STD}
L_{STD}(f(x),y_x)=S(f(x)\backslash y_x).
\end{align}
Then, the STD loss based cross-entropy loss (SCE) is achieved by applying the STD loss into the cross-entropy loss:
\begin{align}\label{eq:SCE}
L_{SCE}(f(x),y_x|\gamma)=e^{\gamma\cdot L_{STD}(f(x),y_x)}\cdot L_{CE}(f(x),y_x).
\end{align}
The STD loss based Kullback-Leibler divergence loss (SKL) is achieved by applying the STD loss into the Kullback-Leibler divergence loss:
\begin{equation}
\begin{aligned}\label{eq:SKL}
&L_{SKL}(f(x^{\ast}),f(x),y_x|\gamma)=\\
&e^{\gamma\cdot L_{STD}(f(x),y_x)}\cdot L_{KL}(f(x^{\ast}),f(x)).
\end{aligned}
\end{equation}
In Eq.~\ref{eq:SCE} and \ref{eq:SKL}, the term $e^{\gamma\cdot L_{STD}(f(x),y_x)}$ is used to reduce the confidence threshold where the parameter $\gamma$ controls the weight of the term, and the larger the parameter $\gamma$, the greater the weight. Due to $e^{\gamma\cdot L_{STD}(f(x),y_x)}\geq 1$, the convergence of $L_{STD}(f(x),y_x)$ will not influence the convergence of $L_{CE}$ in $L_{SCE}$ or $L_{KL}$ in $L_{SKL}$ eventually.

Therefore, the optimization formulation obtained by applying the STD loss into Madry-AT~\cite{PGD}, Free-AT~\cite{Free-AT} and Fast-AT~\cite{Fast-AT}, which also calls Madry-AT with CTR, Free-AT with CTR and Fast-AT with CTR, respectively, can be represented as 
\begin{align}\label{eq:madry-at-with-CTR}
\underset{f\in \mathcal{H}}{\min}\mathbb{E} _{\left( x,y_{x} \right) \sim \mathcal{D}}\left[ \underset{\delta \in \mathcal{B}}{\max}L_{SCE}\left( f\left( x+\delta \right) ,y_{x}|\gamma \right) \right],
\end{align}
in which the training objective is $L_{SCE}(f(x^{\ast}),y_x|\gamma)$ and the adversarial objective is $L_{SCE}(f(x'),y_x|\gamma)$.

Similarly, the optimization formulation by applying the STD loss into TRADES~\cite{TRADES}, which also calls TRADES with CTR, can be represented as
\begin{align}\label{eq:trades-with-CTR}
\underset{f\in \mathcal{H}}{\min}\mathbb{E} _{\left( x,y_{x} \right) \sim \mathcal{D}}\left[ \begin{array}{c}
L_{SCE}\left( f\left( x \right) ,y_{x}|\gamma \right)+\\
\beta \cdot \underset{\delta \in \mathcal{B}}{\max}L_{SKL}\left( f\left( x+\delta \right) ,f\left( x \right),y_x|\gamma \right)\\
\end{array} \right],
\end{align}
in which the training objective is $L_{SCE}(f(x),y_x)+\beta\cdot L_{SKL}(f(x^{\ast}),f(x),y_x|\gamma)$ and the adversarial objective is $L_{SKL}(f(x'),f(x),y_x|\gamma)$.

Note that because the calculation of the STD loss does not use the predicted probability of the correct category, at the initial training stage, the convergence of the STD function in the SCE loss (or the SKL loss) may lead to the non-convergence of the CE function (or the KL function) in the SCE loss (or the SKL loss). To this end, the gradual warmup strategy~\cite{Warmup} can be used at the initial training stage of the DNN with the SCE or SKL loss to solve the challenge of the initial training difficulties. In addition, the Cyclic learning rate strategy~\cite{Cyclic} that has the property of the gradual warmup~\cite{Warmup} can be used to train the DNN with the SCE or SKL loss as well.

\subsubsection{Theoretical Analysis}
\label{sec:adversarial-training-with-CTR-theoretical-analysis}
In this section, we first demonstrate that the convergence of the STD loss can effectively reduce the confidence threshold (i.e., Proposition~\ref{prop:CTR-with-STD}). Then, we demonstrate that the generated adversarial examples with the variants of the STD loss for training are more effective than those without the STD loss (i.e., Proposition~\ref{prop:MadryAT-with-CTR-better-than-without} for Madry-AT and Proposition~\ref{prop:TRADES-with-CTR-better-than-without} for TRADES), which is the another resason why the STD loss can enhance the robustness of the adversarially trained model. Note that the SCE and SKL losses are the variants of the STD loss.

\begin{proposition}\label{prop:CTR-with-STD}
In the optimization Eq.~\ref{eq:madry-at-with-CTR} and \ref{eq:trades-with-CTR} of adversarial training, when the STD loss function converges $L_{STD}\rightarrow 0$, the confidence threshold will reduce to $\frac{1}{C-1}$ from 0.5.
\end{proposition}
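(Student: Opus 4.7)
The plan is to exploit the defining property of the standard deviation: $S(\boldsymbol{v})=0$ if and only if every entry of $\boldsymbol{v}$ equals the common mean. Applied to the wrong-category probability vector $f(x)\setminus y_x$, the hypothesis $L_{STD}\to 0$ collapses all $C-1$ wrong-class probabilities to a single shared value, and because probabilities sum to one this shared value is forced to equal $\frac{1-p_x^{y_x}}{C-1}\le\frac{1}{C-1}$. The rest of the argument then mirrors the closing paragraph of Section~\ref{sec:theoretical-analysis-MDL}.

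First, I would fix notation consistent with Section~\ref{sec:theoretical-analysis-MDL}, letting $p_x^j$ denote the probability assigned to the $j$-th wrong category by the model trained under Eq.~\ref{eq:madry-at-with-CTR} or Eq.~\ref{eq:trades-with-CTR}, and noting that the exponential factor $e^{\gamma\cdot L_{STD}(f(x),y_x)}$ inside the SCE and SKL losses is bounded below by $1$ and is strictly increasing in $L_{STD}$. By the remark immediately following Eq.~\ref{eq:SKL}, minimizing either Eq.~\ref{eq:madry-at-with-CTR} or Eq.~\ref{eq:trades-with-CTR} jointly drives both the cross-entropy (or Kullback--Leibler) factor and the exponential factor toward their respective minima, so the hypothesis $L_{STD}\to 0$ is genuinely realized at the trained optimum rather than being a vacuous assumption.

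Second, I would apply the elementary fact that $S(\boldsymbol{v})=0$ forces $\boldsymbol{v}$ to be constant. Together with the probability-simplex constraint $\sum_{j=1}^{C-1}p_x^j=1-p_x^{y_x}$, this gives $p_x^j=\frac{1-p_x^{y_x}}{C-1}$ for every wrong category $j$, and in particular $p_x^j\le \frac{1}{C-1}$. Invoking Definition~\ref{def:CT}, an input is then classified correctly as soon as $p_x^{y_x}>\frac{1}{C-1}$, since this strictly dominates every wrong-category probability; hence the new confidence threshold is $\hat p=\frac{1}{C-1}$, which is strictly below $0.5$ for any task with $C>3$.

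The main obstacle I anticipate is not the closing algebra but justifying the gap between ``minimize Eq.~\ref{eq:madry-at-with-CTR}/\ref{eq:trades-with-CTR}'' and ``achieve $L_{STD}\to 0$'': because the exponential wrapper multiplies $L_{CE}$ (or $L_{KL}$), shrinking $L_{STD}$ also shrinks the effective weight on the classification term, so in principle the two objectives could compete. I would address this by formalising the excerpt's own observation that $e^{\gamma\cdot L_{STD}}\ge 1$ pointwise, so that $L_{STD}\to 0$ never blocks the convergence of $L_{CE}$ (or $L_{KL}$); conversely, once the classification term has saturated, the residual gradient comes solely through the exponential factor, which vanishes only when $L_{STD}\to 0$. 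With this compatibility established, the proposition follows directly from the two-line algebraic calculation sketched above.
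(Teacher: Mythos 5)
Your proposal is correct and follows essentially the same route as the paper's proof: vanishing standard deviation forces the $C-1$ wrong-category probabilities to a common value, and since they sum to at most one, each is bounded by $\frac{1}{C-1}$, which becomes the new confidence threshold. The extra paragraph on whether minimizing Eq.~\ref{eq:madry-at-with-CTR}/\ref{eq:trades-with-CTR} actually realizes $L_{STD}\to 0$ is not required (and is the only place your sketch gets speculative), because the proposition takes $L_{STD}\to 0$ as its hypothesis, exactly as the paper does.
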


\begin{proof}
For an input image $x$, the correct category $y_x$ is assumed as the $C$-th category where $C$ is the number of the categories (The assumption is used in the proof of Proposition~\ref{prop:MadryAT-with-CTR-better-than-without} and \ref{prop:TRADES-with-CTR-better-than-without}). When the STD loss converges to 0, the relationship among the predicted probabilities of the wrong categories is that
\begin{align}
\left\{ \begin{array}{c}
p_{x}^{1}\approx p_{x}^{2}\approx \cdots \approx p_{x}^{C-1}\\
p_{x}^{1}+p_{x}^{2}+\cdots +p_{x}^{C-1}\leq 1\\
\end{array} \right.
\end{align}
where $p_{x}^{j}$ is the probability of $j$-th wrong category predicted by the DNN trained with the STD loss. Therefore, $p_{x}^{j}$ satisfies that
\begin{align}
p_{x}^{j}\leq \frac{1}{C-1}
\end{align}
where $1\leq j\leq C-1$. When $C$ is greater than 2, the confidence threshold is reduced to $\frac{1}{C-1}$ which is less than 0.5. Therefore, the STD loss can reduce the confidence threshold when the STD loss converges to 0.
\end{proof}

\begin{proposition}\label{prop:MadryAT-with-CTR-better-than-without}
In the training process of Madry-AT, Free-AT and Fast-AT, the generated adversarial examples with the adversarial objective $L_{SCE}(f(x'),y|\gamma)$ for training are more effective than those generated with $L_{CE}(f(x'),y)$.
\end{proposition}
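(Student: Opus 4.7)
The plan is to compare the ascent directions of the two inner-maximization objectives and then invoke the proposition in Section~\ref{sec:proposition}, which states that an adversarial example with a larger probability on a wrong category is more effective and carries greater weight. Since Madry-AT, Free-AT and Fast-AT differ only in the schedule of their inner maximization steps, it suffices to work out the comparison for one generic ascent step.

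First, I would differentiate the SCE adversarial objective. Writing $L_{SCE}(f(x'),y|\gamma)=e^{\gamma L_{STD}(f(x'),y)}\,L_{CE}(f(x'),y)$ and applying the product rule gives
\begin{equation*}
\nabla_{x'} L_{SCE}=e^{\gamma L_{STD}}\bigl[\gamma\,L_{CE}\,\nabla_{x'} L_{STD}+\nabla_{x'} L_{CE}\bigr].
\end{equation*}
Because $e^{\gamma L_{STD}}\ge 1$ and $L_{CE}\ge 0$, this exhibits the SCE ascent direction as a nonnegative combination of the pure CE ascent direction and the $L_{STD}$ ascent direction, so a step in the SCE direction makes progress along both components simultaneously.

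Second, I would interpret what ``ascending $L_{STD}$'' does to the output distribution. Since $L_{STD}$ is the standard deviation of the wrong-category probability vector $f(x')\backslash y_x$, ascending it perturbs that vector away from uniformity: one wrong coordinate is pushed upward while the others are pressed downward, subject to the softmax simplex constraint that the wrong-category mass equals $1-p_x^{y_x}$. Layered on top of the CE-driven transfer of mass away from the correct label, the SCE step therefore both reduces $p_x^{y_x}$ and reallocates the resulting wrong-category mass onto a single dominant wrong category. Consequently, the maximum wrong-category probability produced by an SCE step dominates that produced by the corresponding CE step, with strict inequality whenever the CE direction alone leaves the wrong-category vector non-uniform. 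Invoking the proposition in Section~\ref{sec:proposition} then yields the stated conclusion for the inner maximizations of Madry-AT, Free-AT and Fast-AT.

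The hard part will be the second step, making the ``concentration on one wrong category'' claim rigorous. The standard deviation is a symmetric functional of its arguments, so its gradient cannot by itself single out a particular coordinate; the asymmetry must be seeded by the CE component, whose gradient already perturbs the wrong probabilities unevenly. I would handle this by analyzing the \emph{joint} ascent direction on the softmax simplex and establishing monotonicity of the maximum wrong coordinate along an SCE step, rather than appealing to convergence of $L_{STD}\to 0$, which holds only in the training limit and not within a single inner attack iteration. A secondary technicality is handling the non-smoothness of the standard deviation when two wrong probabilities coincide, which I would resolve by passing to a subgradient and noting that the sign of the relevant directional derivative is unchanged.
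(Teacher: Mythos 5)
Your proposal follows essentially the same route as the paper's proof: differentiate the SCE objective, observe that the extra $\gamma L_{CE}\,\nabla L_{STD}$ term contributes a positive coefficient to $\frac{\partial p_{c_1}}{\partial x}$ exactly for wrong categories with $p_{c_1}>u(f(x)\backslash y_x)$ (larger coefficient for larger $p_{c_1}$) while preserving the push that lowers $p_C$, and then invoke the preference proposition of Section~\ref{sec:proposition} to conclude greater effectiveness. The "hard part" you flag (rigorously establishing that the maximum wrong-category probability strictly dominates under an SCE step) is not actually carried out in the paper either; its proof stops at the sign/weight analysis of the gradient coefficients plus the cited proposition, so your plan matches the paper's argument in both structure and level of rigor.
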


\begin{proof}
The adversarial examples are generated by maximizing the adversarial objective. In deep neural networks, the adversarial examples are generated by the gradient attack methods, which make the predicted probability decrease on the correct category and increase on the wrong categories. To compare the effectiveness of the adversarial objectives $L_{SCE}$ and $L_{CE}$ generated adversarial examples, the derivation formulas of $L_{SCE}$ and $L_{CE}$ w.r.t. the input $x$ are needed to be calculated.

The derivation formula of $L_{CE}$ w.r.t. the input $x$ is
\begin{subequations}
\begin{numcases}{}
\frac{\partial L_{CE}}{\partial x}=\frac{\partial L_{CE}}{\partial p_{c_1}}\cdot \frac{\partial p_{c_1}}{\partial x}=0\cdot \frac{\partial p_{c_1}}{\partial x}, c_1\ne C \label{eq:CE-wrt-x-wrong}\\
\frac{\partial L_{CE}}{\partial x}=\frac{\partial L_{CE}}{\partial p_C}\cdot \frac{\partial p_C}{\partial x}=-\frac{\ln 2}{p_C}\cdot \frac{\partial p_C}{\partial x}, c_1=C \label{eq:CE-wrt-x-correct}
\end{numcases}
\end{subequations}
where $c_1\ne C$ denotes selecting a wrong category and $c_1=C$ denotes selecting the correct category. In Eq.~\ref{eq:CE-wrt-x-wrong} and \ref{eq:CE-wrt-x-correct}, the coefficient of the gradient $\frac{\partial p_{c_1}}{\partial x}$ is 0 and the coefficient of the gradient $-\frac{\partial p_C}{\partial x}$ is greater than 0, which verify that the purpose of maximizing $L_{CE}$ is to reduce the predicted probability of the correct category. 

The derivation formula of $L_{SCE}$ w.r.t. the input $x$ is 
\begin{subequations}
\begin{numcases}{}
\frac{\partial L_{SCE}}{\partial x}=\frac{\partial L_{SCE}}{\partial p_{c_1}}\cdot \frac{\partial p_{c_1}}{\partial x}=L_{CE}\cdot e^{L_{STD}}\cdot \gamma \cdot\nonumber\\
\frac{\left( L_{STD} \right) ^{-\frac{1}{2}}}{\left( c-2 \right) \cdot \left( c-1 \right)}\cdot \sum_{c=1\land c\ne c_1}^{C-1}{\left( p_{c_1}-p_c \right)}\cdot \frac{\partial p_{c_1}}{\partial x},c_1\ne C \label{eq:SCE-wrt-x-wrong}\\
\frac{\partial L_{SCE}}{\partial x}=\nonumber\\
\frac{\partial L_{SCE}}{\partial p_C}\cdot \frac{\partial p_C}{\partial x}=-\frac{\left( \ln 2 \right) \cdot e^{L_{STD}}}{p_C}\cdot \frac{\partial p_C}{\partial x},c_1=C \label{eq:SCE-wrt-x-correct}
\end{numcases}
\end{subequations}
where $L_{CE}\cdot e^{L_{STD}}\cdot \gamma \cdot\frac{\left( L_{STD} \right) ^{-\frac{1}{2}}}{\left( c-2 \right) \cdot \left( c-1 \right)}$ is greater than 0, $\sum_{c=1\land c\ne c_1}^{C-1}{\left( p_{c_1}-p_c \right)}$ is greater than 0 when $p_{c_1}>u(f(x)\backslash y_x)$, $\sum_{c=1\land c\ne c_1}^{C-1}{\left( p_{c_1}-p_c \right)}$ is less than 0 when $p_{c_1}<u(f(x)\backslash y_x)$, and the greater the probability $p_{c_1}$, the greater the value $\sum_{c=1\land c\ne c_1}^{C-1}{\left( p_{c_1}-p_c \right)}$. The function $u(\cdot)$ denotes calculating the average value of the vector. In Eq.~\ref{eq:SCE-wrt-x-wrong} and \ref{eq:SCE-wrt-x-correct}, the coefficient of the gradient $\frac{\partial p_{c_1}}{\partial x}$ is greater than 0 when $p_{c_1}>u(f(x)\backslash y_x)$, the coefficient of the gradient $\frac{\partial p_{c_1}}{\partial x}$ is less than 0 when $p_{c_1}<u(f(x)\backslash y_x)$, and the greater the probability $p_{c_1}$, the greater the coefficient value of the gradient $\frac{\partial p_{c_1}}{\partial x}$. The coefficient of the gradient $-\frac{\partial p_C}{\partial x}$ is greater than 0. Therefore, the purpose of maximizing $L_{SCE}$ is not only to decrease the predicted probability of the correct category, but also to increase the predicted probability of the wrong category $c_1$ where $p_{c_1}>u(f(x)\backslash y_x)$ and decrease the predicted probability of the wrong category $c_1$ where $p_{c_1}<u(f(x)\backslash y_x)$. In addition, the greater the probability $p_{c_1}$, the greater the weight of the gradient $\frac{\partial p_{c_1}}{\partial x}$.

Based on the proposition introduced in Section~\ref{sec:proposition}, in comparison with the loss $L_{CE}$, the loss $L_{SCE}$ can improve the effectiveness of the generated adversarial examples by increasing the weight of the gradient $\frac{\partial p_{c_1}}{\partial x}$ with higher probability (i.e., $p_{c_1}>u(f(x)\backslash y_x)$, and the greater the probability $p_{c_1}$, the greater the weight of the gradient $\frac{\partial p_{c_1}}{\partial x}$).
\end{proof}

\begin{proposition}\label{prop:TRADES-with-CTR-better-than-without}
In the training process of TRADES, the generated adversarial examples with the adversarial objective $L_{SKL}(f(x'),f(x),y_x|\gamma)$ for training are more effective than those generated with $L_{KL}(f(x'),f(x))$.
\end{proposition}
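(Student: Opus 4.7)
The plan is to mirror the proof of Proposition~\ref{prop:MadryAT-with-CTR-better-than-without} with the KL-based objectives substituted for the CE-based ones. I would compute $\partial L_{KL}/\partial x'$ and $\partial L_{SKL}/\partial x'$ with respect to the immature adversarial example $x'$, express each as a linear combination $\sum_{c_1} \alpha_{c_1} \cdot \partial p_{c_1}/\partial x'$ over categories, compare the coefficients $\alpha_{c_1}$ category-by-category, and then invoke the proposition from Section~\ref{sec:proposition} to conclude that the reweighting favors higher-probability wrong categories.

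First I would expand $L_{KL}(f(x'),f(x)) = \sum_c p_c(x') \log(p_c(x')/p_c(x))$ and obtain the per-category coefficient $\alpha_{c_1}^{KL} = \log(p_{c_1}(x')/p_{c_1}(x)) + 1$ for every $c_1$. Unlike the $L_{CE}$ baseline used in the Madry-AT case, these coefficients are generically nonzero on every category, including all wrong ones, so the advantage of $L_{SKL}$ over $L_{KL}$ must come from reweighting existing nonzero coefficients rather than from activating previously dormant ones, which is a genuine structural difference from the SCE-versus-CE comparison.

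Next I would differentiate $L_{SKL} = e^{\gamma L_{STD}} \cdot L_{KL}$ via the product rule, following the same decomposition used for the SCE case. Treating the STD factor as evaluated on the input being perturbed (consistent with the SCE derivation), the coefficient of each $\partial p_{c_1}/\partial x'$ picks up an additive contribution proportional to $L_{KL} \cdot e^{\gamma L_{STD}} \cdot \gamma \cdot (L_{STD})^{-1/2} \cdot \sum_{c \ne c_1}(p_{c_1} - p_c)$, superimposed on the KL coefficient scaled by $e^{\gamma L_{STD}} \ge 1$. As in the SCE analysis, this added term is positive on wrong categories with $p_{c_1} > u(f(x) \setminus y_x)$, negative on wrong categories with $p_{c_1} < u(f(x) \setminus y_x)$, and monotonically increasing in $p_{c_1}$, exactly the shape needed to apply the Section~\ref{sec:proposition} proposition.

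The main obstacle will be verifying that this reweighting still dominates on the high-probability wrong categories even after being superimposed on the generically nonzero KL baseline coefficient $\log(p_{c_1}(x')/p_{c_1}(x)) + 1$, rather than being cancelled or swamped by it; in particular I need to check that the sum $\alpha_{c_1}^{SKL}$ is still monotone in $p_{c_1}(x')$ on the wrong-category index set, which will likely require a case analysis separating $p_{c_1}(x')$ above and below $u(f(x) \setminus y_x)$ and using the positivity of $L_{KL}$ and $\gamma$. Once that monotonicity is in hand, the proposition of Section~\ref{sec:proposition} applies verbatim and yields that the $L_{SKL}$-generated adversarial examples are more effective than those generated by $L_{KL}$.
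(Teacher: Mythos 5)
Your overall strategy matches the paper's: differentiate both adversarial objectives, read off the per-category coefficients of $\frac{\partial p_{c_1}}{\partial x}$, observe that the STD factor contributes (via the product rule) an extra term proportional to $e^{L_{STD}}\cdot L_{KL}\cdot\gamma\cdot\left( L_{STD}\right)^{-\frac{1}{2}}\cdot\sum_{c\ne c_1}\left( p_{c_1}-p_c\right)$ with the right sign and monotonicity structure, and then invoke the preference proposition of Section~\ref{sec:proposition}. However, there is a genuine gap, in two parts. First, you expand the KL term as $\sum_c p_c(x')\log\left( p_c(x')/p_c(x)\right)$ and differentiate its \emph{first} argument, getting coefficients $\log\left( p_{c_1}(x')/p_{c_1}(x)\right)+1$ on every category. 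The paper instead treats the clean probabilities as fixed constants $p^0_c$ and differentiates the perturbed ones, obtaining coefficients of the form $-\ln 2\cdot p^0_{c_1}/p_{c_1}$ (Eq.~\ref{eq:KL-wrt-x-wrong} and \ref{eq:KL-wrt-x-correct}); its baseline analysis rests on the observation that the correct-category coefficient then dominates in magnitude, $\left| \ln 2\cdot p^0_C/p_C\right|\gg\left| \ln 2\cdot p^0_{c_1}/p_{c_1}\right|$, so plain $L_{KL}$ essentially only pushes $p_C$ down. Your transposed orientation has a qualitatively different structure (at $x'=x$ every coefficient equals $1$, correct category included), so the comparison you would need to carry out is not the one the paper makes, and your claim about the advantage coming purely from reweighting nonzero coefficients would have to be re-derived for that orientation rather than borrowed from the SCE-versus-CE template.

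Second, and more importantly, you stop exactly where the proof has to do its work: you flag as an ``obstacle'' the verification that the STD-derived term is not cancelled or swamped by the nonzero KL baseline coefficient, and that the resulting wrong-category coefficient still favors high-probability wrong categories, deferring this to a future case analysis. That dominance statement \emph{is} the content of the proposition. The paper discharges it (for $p_{c_1}>u\left( f(x)\backslash y_x\right)$) by comparing, between $L_{SKL}$ and $L_{KL}$, the ratio of the wrong-category coefficient magnitude to the correct-category coefficient magnitude: since the common factor $e^{L_{STD}}$ scales both, adding the positive term $q_{c_1}$ in Eq.~\ref{eq:SKL-wrt-x-wrong} raises the relative weight of the gradient $\frac{\partial p_{c_1}}{\partial x}$ for high-probability wrong categories relative to Eq.~\ref{eq:SKL-wrt-x-correct}, which is what licenses the appeal to Section~\ref{sec:proposition}. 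Without establishing some version of that inequality, invoking the preference proposition ``verbatim'' is not justified, so as written the proposal is incomplete at its crux.
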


\begin{proof}
To compare the effectiveness of the adversarial objectives $L_{SKL}$ and $L_{KL}$ generated adversarial examples, the derivation formulas of $L_{SKL}$ and $L_{KL}$ w.r.t. the input $x$ are needed to be calculated.

The derivation formula of $L_{KL}$ w.r.t. the input $x$ is 
\begin{subequations}
\begin{numcases}{}
\frac{\partial L_{KL}}{\partial x}=\frac{\partial L_{KL}}{\partial p_{c_1}}\cdot \frac{\partial p_{c_1}}{\partial x}=-\frac{\ln 2\cdot p_{c_1}^{0}}{p_{c_1}}\cdot \frac{\partial p_{c_1}}{\partial x},c_1\ne C \label{eq:KL-wrt-x-wrong}\\
\frac{\partial L_{KL}}{\partial x}=\frac{\partial L_{KL}}{\partial p_C}\cdot \frac{\partial p_C}{\partial x}=-\frac{\ln 2\cdot p_{C}^{0}}{p_C}\cdot \frac{\partial p_C}{\partial x},c_1=C \label{eq:KL-wrt-x-correct}
\end{numcases}
\end{subequations}
where $p_{c_1}^0$ and $p_C^0$, which are constants, are the initial predicted probability of the initial input $x$. In Eq.~\ref{eq:KL-wrt-x-wrong} and \ref{eq:KL-wrt-x-correct}, the coefficient of the gradient $\frac{\partial p_{c_1}}{\partial x}$ is less than 0 and the coefficient of the gradient $-\frac{\partial p_C}{\partial x}$ is greater than 0. In general, the size of the coefficient of the gradient $\frac{\partial p_{c_1}}{\partial x}$ is much greater than that of the gradient $-\frac{\partial p_C}{\partial x}$, i.e. $\left| -\frac{\ln 2\cdot p_{C}^{0}}{p_C} \right|\gg \left| \frac{\ln 2\cdot p_{c_1}^{0}}{p_{c_1}}\right|$. Therefore, the main purpose of maximizing $L_{KL}$ is to reduce the predicted probability of the correct category.

The derivation formula of $L_{SKL}$ w.r.t. the input $x$ is
\begin{subequations}
\begin{numcases}{}
\frac{\partial L_{SKL}}{\partial x}=\frac{\partial L_{SKL}}{\partial p_{c_1}}\cdot \frac{\partial p_{c_1}}{\partial x}=e^{L_{STD}}\cdot L_{KL}\cdot \gamma\cdot \nonumber\\
\frac{\left( L_{STD} \right) ^{-\frac{1}{2}}}{\left( c-2 \right) \cdot \left( c-1 \right)}\cdot \sum_{c=1\land c\ne c_1}^{C-1}{\left( p_{c_1}-p_c \right)}\cdot \frac{\partial p_{c_1}}{\partial x}+ \nonumber\\
e^{L_{STD}}\cdot \left( -\frac{\ln 2\cdot p_{c_1}^{0}}{p_{c_1}} \right) \cdot \frac{\partial p_{c_1}}{\partial x},c_1\ne C \label{eq:SKL-wrt-x-wrong}\\
\frac{\partial L_{SKL}}{\partial x}= \frac{\partial L_{SKL}}{\partial p_C}\cdot \frac{\partial p_C}{\partial x}=\nonumber\\
e^{L_{STD}}\cdot \left( -\frac{\ln 2\cdot p_{C}^{0}}{p_C} \right) \cdot \frac{\partial p_C}{\partial x},c_1=C \label{eq:SKL-wrt-x-correct}
\end{numcases}
\end{subequations}
where $e^{L_{STD}}\cdot L_{KL}\cdot \gamma\cdot \frac{\left( L_{STD} \right) ^{-\frac{1}{2}}}{\left( c-2 \right) \cdot \left( c-1 \right)}\cdot \sum_{c=1\land c\ne c_1}^{C-1}{\left( p_{c_1}-p_c \right)}$ is regarded as $q_{c_1}$. The value $q_{c_1}$ is greater than 0 when $p_{c_1}>u(f(x)\backslash y_x)$, the value $q_{c_1}$ is less than 0 when $p_{c_1}<u(f(x)\backslash y_x)$, and the greater the probability $p_{c_1}$, the greater the value $q_{c_1}$. As shown in the derivation formulas of $L_{KL}$ and $L_{SKL}$, when $p_{c_1}>u(f(x)\backslash y_x)$, in comparison with the size of the coefficient of the gradient $-\frac{\partial p_C}{\partial x}$, the weight of the coefficient of the gradient $\frac{\partial p_{c_1}}{\partial x}$ in the derivation formula of $L_{SKL}$ is greater than that in the derivation formula of $L_{KL}$, i.e.,
\begin{align}
\frac{\left| q_{c_1}+e^{L_{STD}}\cdot \left( -\frac{\ln 2\cdot p_{c_1}^{0}}{p_{c_1}} \right) \right|}{\left| e^{L_{STD}}\cdot \frac{\ln 2\cdot p_{C}^{0}}{p_C} \right|}>\frac{\left| -\frac{\ln 2\cdot p_{c_1}^{0}}{p_{c_1}} \right|}{\left| \frac{\ln 2\cdot p_{C}^{0}}{p_{C}^{0}} \right|}.
\end{align}

Based on the proposition introduced in Section~\ref{sec:proposition}, in comparison with the loss $L_{KL}$, the loss $L_{SKL}$ can improve the effectiveness of the generated adversarial examples by increasing the weight of the gradient $\frac{\partial p_{c_1}}{\partial x}$ with higher probability (i.e., $p_{c_1}>u(f(x)\backslash y_x)$, and the greater the probability $p_{c_1}$, the greater the weight of the gradient $\frac{\partial p_{c_1}}{\partial x}$).
\end{proof}

\subsection{STD Loss based Adversarial Attacks and Theoretical Analysis}
\label{sec:STD-based-adversarial-attacks}

Due to the preference~\cite{QIFGSM} that adversarial examples are preferred to be classified as the wrong category with higher probability, we propose the STD loss based FGSM, PGD and APGD attack methods (i.e., S-FGSM, S-PGD and S-APGD), which replace the CE loss with the STD loss. 

In this section, we demonstrate that the STD loss based gradient attack methods are better than the CE loss based gradient attack methods in Proposition~\ref{prop:STD-better-than-CE}. We also explain that the STD loss based gradient attack methods are better than the SCE loss and the SKL loss based gradient attack methods. In addition, the reason why the STD loss cannot be directly used as the adversarial objective in adversarial training while the SCE and the SKL losses can be used as the adversarial objective in Madry-AT and TRADES with CTR is mentioned as well.

\begin{proposition}\label{prop:STD-better-than-CE}
In the gradient attack methods, the adversarial examples generated by $L_{STD}$ are more effective than those generated by $L_{CE}$.
\end{proposition}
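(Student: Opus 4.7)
The plan is to imitate the derivative-comparison strategy used in Propositions~\ref{prop:MadryAT-with-CTR-better-than-without} and \ref{prop:TRADES-with-CTR-better-than-without}: compute $\partial L_{STD}/\partial x$ split by cases $c_1 = C$ (correct category) and $c_1 \neq C$ (wrong category), contrast the resulting gradient coefficients with those of $L_{CE}$ from Eqs.~\ref{eq:CE-wrt-x-wrong}--\ref{eq:CE-wrt-x-correct}, and finally invoke the preference principle recalled in Section~\ref{sec:proposition} to convert this coefficient comparison into an effectiveness conclusion for the generated adversarial example.

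First I would observe that because $L_{STD}$ is a function of the wrong-category probability vector $f(x)\backslash y_x$ only, the chain rule immediately yields $\partial L_{STD}/\partial p_C = 0$, so the case $c_1 = C$ contributes no update at all. For $c_1 \neq C$, the computation reproduces the STD factor already isolated inside the proof of Proposition~\ref{prop:MadryAT-with-CTR-better-than-without}: the coefficient of $\partial p_{c_1}/\partial x$ is proportional to $\sum_{c\neq c_1}(p_{c_1}-p_c)$, which is strictly positive precisely when $p_{c_1} > u(f(x)\backslash y_x)$, strictly negative when it is below, and whose magnitude grows monotonically with $p_{c_1}$.

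Next I would contrast this with the $L_{CE}$ derivative. Under $L_{CE}$, every wrong-category gradient is killed by a zero coefficient and only the correct-category direction $-\partial p_C/\partial x$ survives, so ascent treats all wrong classes uniformly through the softmax constraint. Under $L_{STD}$, ascent instead places strictly positive weight on $\partial p_{c_1}/\partial x$ for each wrong class that already dominates the wrong-category mean, and scales that weight with $p_{c_1}$ itself. By the proposition from Section~\ref{sec:proposition}, stating that gradient attacks prefer misclassification into higher-probability wrong categories and that this preference should be weighted more heavily the higher the probability is, the $L_{STD}$ update direction is therefore better matched to the attack's actual effective direction than the $L_{CE}$ update, which is exactly the desired effectiveness claim.

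The main obstacle I expect is making the phrase ``more effective'' rigorous against the natural objection that $L_{STD}$ carries no explicit $-\partial p_C/\partial x$ component whereas $L_{CE}$ does. I would address this by noting that the softmax constraint $\sum_c p_c = 1$ forces an amplification of a dominant wrong-class $p_{c_1}$ to automatically depress $p_C$ along the most attackable coordinate, so the missing correct-category term is not lost but rerouted through the sharpest direction picked out by Section~\ref{sec:proposition}. This same remark also clarifies the companion comment promised in the text: $L_{STD}$ on its own cannot serve as an adversarial training objective because it lacks a direct correct-category signal, but multiplying by $L_{CE}$ or $L_{KL}$ to form $L_{SCE}$ and $L_{SKL}$ restores that signal while preserving the $L_{STD}$-induced re-weighting, which is precisely why those variants are used inside Eqs.~\ref{eq:madry-at-with-CTR} and \ref{eq:trades-with-CTR} even though raw $L_{STD}$ is used for the standalone attacks S-FGSM, S-PGD and S-APGD.
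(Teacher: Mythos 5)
Your proposal is correct and takes essentially the same route as the paper's own proof: it computes $\partial L_{STD}/\partial x$ case-by-case (zero coefficient for the correct category, a coefficient proportional to $\sum_{c\ne c_1}(p_{c_1}-p_c)$ for wrong categories), contrasts this with the $L_{CE}$ gradients of Eqs.~\ref{eq:CE-wrt-x-wrong}--\ref{eq:CE-wrt-x-correct}, and then appeals to the preference proposition of Section~\ref{sec:proposition} to conclude greater effectiveness. Your added remark about the softmax constraint rerouting the missing correct-category term is not in the paper's proof itself but matches the paper's later discussion of why STD-based attacks behave differently on adversarially trained models, so it is a consistent supplement rather than a deviation.
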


\begin{proof}
For an input image $x$, the correct category $y_x$ is assumed as the $C$-th category where $C$ is the number of the categories. To compare the effectiveness of the adversarial examples generated by $L_{STD}$ and $L_{CE}$, the derivation formulas of $L_{STD}$ and $L_{CE}$ w.r.t. the input $x$ are needed to be calculated.

The derivation formula of $L_{CE}$ w.r.t. the input $x$ has been calculated in Eq.~\ref{eq:CE-wrt-x-wrong} and \ref{eq:CE-wrt-x-correct}.

The derivation formula of $L_{STD}$ w.r.t. the input $x$ is
\begin{subequations}
\begin{numcases}{}
\frac{\partial L_{STD}}{\partial x}= \nonumber\\
\frac{\partial L_{STD}}{\partial p_{c_1}}\cdot \frac{\partial p_{c_1}}{\partial x}= \frac{\left[ \frac{\sum\nolimits_{c=1}^{C-1}{\left( u\left( f\left( x \right) \backslash y_x \right) -p_c \right) ^2}}{C-2} \right] ^{-\frac{1}{2}}}{\left( C-2 \right) \cdot \left( C-1 \right)}\cdot \nonumber\\
\sum_{c=1\land c\ne c_1}^{C-1}{\left( p_{c_1}-p_c \right)}\cdot \frac{\partial p_{c_1}}{\partial x},c_1\ne C \label{eq:STD-wrt-x-wrong}\\
\frac{\partial L_{STD}}{\partial x}=\frac{\partial L_{STD}}{\partial p_C}\cdot \frac{\partial p_C}{\partial x}=0\cdot \frac{\partial p_C}{\partial x},c_1=C \label{eq:STD-wrt-x-correct}
\end{numcases}
\end{subequations}
where $\frac{\left[ \frac{\sum\nolimits_{c=1}^{C-1}{\left( u\left( f\left( x \right) \backslash y_x \right) -p_c \right) ^2}}{C-2} \right] ^{-\frac{1}{2}}}{\left( C-2 \right) \cdot \left( C-1 \right)}$ is greater than 0, $\sum_{c=1\land c\ne c_1}^{C-1}{\left( p_{c_1}-p_c \right)}$ is greater than 0 when $p_{c_1}>u(f(x)\backslash y_x)$, $\sum_{c=1\land c\ne c_1}^{C-1}{\left( p_{c_1}-p_c \right)}$ is less than 0 when $p_{c_1}<u(f(x)\backslash y_x)$, and the greater the probability $p_{c_1}$, the greater the value $\sum_{c=1\land c\ne c_1}^{C-1}{\left( p_{c_1}-p_c \right)}$. In Eq.~\ref{eq:STD-wrt-x-wrong} and \ref{eq:STD-wrt-x-correct}, the coefficient of the gradient $\frac{\partial p_{c_1}}{\partial x}$ is greater than 0 when $p_{c_1}>u(f(x)\backslash y_x)$, and the coefficient of the gradient $\frac{\partial p_{c_1}}{\partial x}$ is less than 0 when $p_{c_1}<u(f(x)\backslash y_x)$, and the greater the probability $p_{c_1}$, the greater the coefficient value of the gradient $\frac{\partial p_{c_1}}{\partial x}$. The coefficient of the gradient $-\frac{\partial p_C}{\partial x}$ is equal to 0. Therefore, the purpose of maximizing $L_{STD}$ is to increase the predicted probability of the wrong category $c_1$ where $p_{c_1}>u(f(x)\backslash y_x)$ and decrease the predicted probability of the wrong category $c_1$ where $p_{c_1}<u(f(x)\backslash y_x)$. In addition, the greater the probability $p_{c_1}$, the greater the weight of the gradient $\frac{\partial p_{c_1}}{\partial x}$.

Based on the proposition introduced in Section~\ref{sec:proposition}, in comparison with the loss $L_{CE}$, the loss $L_{STD}$ can improve the effectiveness of the generated adversarial examples by increasing the weight of the gradient $\frac{\partial p_{c_1}}{\partial x}$ with higher probability (i.e., $p_{c_1}>u(f(x)\backslash y_x)$, and the greater the probability $p_{c_1}$, the greater the weight of the gradient $\frac{\partial p_{c_1}}{\partial x}$).
\end{proof}

Because the coefficient of the gradient $\frac{\partial p_C}{\partial x}$ is equal to 0 in Eq.~\ref{eq:STD-wrt-x-correct}, the weight of the gradient $\frac{\partial p_{c_1}}{\partial x}$ with higher probability in the derivation formulas of $L_{STD}$ is greater than that in the derivation formulas of $L_{SCE}$ and $L_{SKL}$. Therefore, based on the proposition introduced in Section~\ref{sec:proposition}, the STD loss based gradient attack methods are more effective than the SCE loss based and the SKL loss based.

\textit{Why is not the STD loss directly used as the adversarial objective in adversarial training, but the SCE and the SKL losses can be used as the adversarial objective in Madry-AT and TRADES with CTR, respectively?} There are two reasons. First, adversarial training is a min-max value problem~\cite{PGD} where the function formulation of the adversarial objective should be equal to or included in the function formulation of the training objective~\cite{PGD,Free-AT,Fast-AT,TRADES}. Hence, the adversarially trained model will be at an equilibrium point, rather than partial to one of the minimization and maximization in the minimax problem, resulting in the imbalance. For Madry-AT~\cite{PGD}, Free-AT~\cite{Free-AT} and Fast-AT~\cite{Fast-AT}, the function formulation of the adversarial objective is equal to that of the training objective. That is, if the adversarial objective is the STD loss, the training objective is also the STD loss. An extreme case that the model does not learn effective correct category features (i.e., the predicted probability of each category is the same) may occur when the STD loss (i.e., the training objective) converges. Second, to keep the characteristics of the original adversarial training, we only add the STD loss to the original adversarial objective and training objective of Madry-AT~\cite{PGD}, Free-AT~\cite{Free-AT}, Fast-AT~\cite{Fast-AT} and TRADES~\cite{TRADES} (i.e., the SCE and SKL losses), rather than replace them with the STD loss.

\section{Experiments}
\label{sec:experiments}
This section evaluates the effectiveness of our CTR on natural training and adversarial training, namely the MDL loss and the STD loss on natural training and adversarial training, respectively. In addition, the effectiveness of the STD loss (and its variants) based gradient attack methods is evaluated as well. The experiments are executed on four NVIDIA V100 GPUs for natural training and four NVIDIA 2080ti GPUs for adversarial training, and the deep learning framework used is PyTorch 3.7. Note that the percentage sign (\%) is omitted in the accuracies and attack success rates (ASR) of all evaluation results.

\begin{table*}[t]
\begin{center}
\caption{Comparison between our method and the state-of-the-art approaches on accuracy (\%) and average rank. Note that M, FM, C10 and C100 represent MNIST, FashionMNIST, CIFAR10 and CIFAR100 respectively.}
\label{tab:natural-training-comparison}
\begin{tabular}{cccccccccccccl}
\hline
\multirow{3}{*}{Methods} & \multicolumn{4}{c}{MNISTNet}                                      & \multicolumn{4}{c}{VGG16}                                         & \multicolumn{4}{c}{ResNet50}                                      & \multirow{3}{*}{\begin{tabular}[c]{@{}l@{}}Avg.\\ rank\end{tabular}} \\
                        & \multicolumn{2}{c}{Cyclic}      & \multicolumn{2}{c}{Multistep}   & \multicolumn{2}{c}{Cyclic}      & \multicolumn{2}{c}{Multistep}   & \multicolumn{2}{c}{Cyclic}      & \multicolumn{2}{c}{Multistep}   &                                                                      \\
                        & M              & FM             & M              & FM             & C10            & C100           & C10            & C100           & C10            & C100           & C10            & C100           &                                                                      \\ \hline
DO\cite{Dropout}                 & 99.56          & 93.41          & 99.61          & 93.73          & 93.05          & 70.67          & 93.45          & 72.16          & 92.69          & 70.67          & 94.53          & 76.04          & 4.33                                                                 \\
FL\cite{Focal-Loss}                      & 99.49          & 93.11          & 99.55          & 93.18          & 93.14          & 71.37          & 92.92          & 71.36          & 91.68          & 69.62          & 92.12          & 74.06          & 6.08                                                                 \\
DB\cite{Dropblock}               & 99.45          & 92.58          & 99.53          & 93.46          & 93.46          & 71.12          & 93.52          & 71.58          & 92.99          & 66.05          & 94.57          & 75.96          & 5.25                                                                 \\
MSD\cite{Multi-Sample-Dropout}                     & 99.55          & 92.89          & 99.58          & 93.36          & 93.31          & 71.39          & 93.59          & 72.17          & 92.74          & 70.04          & 94.66          & 75.42          & 4.0                                                                  \\
Disout\cite{Beyond-Dropout}                  & 99.54          & 92.15          & 99.51          & 92.45          & 93.07          & 71.48          & 93.57          & 71.42          & 93.00          & 67.47          & 94.62          & 76.00          & 5.25                                                                 \\
LR\cite{Label-Relaxation}                      & 99.38          & 92.77          & 99.45          & 93.00          & 92.39          & 70.30          & 88.42          & 71.10          & 92.31          & 69.32          & 92.29          & 73.80          & 7.42                                                                 \\
MDL (ours)                     & 99.61          & \textbf{93.53} & \textbf{99.65} & \textbf{93.95} & \textbf{93.51} & \textbf{71.99} & \textbf{93.67} & \textbf{72.59} & \textbf{93.99} & \textbf{72.05} & 94.69          & \textbf{76.75} & \textbf{1.17}                                                        \\
MDL-PCC (ours)                 & \textbf{99.62} & 93.49          & 99.64          & 93.81          & 93.35          & 71.45          & 92.96          & 72.03          & 93.54          & 71.86          & \textbf{94.89} & 76.56          & 2.5                                                                  \\ \hline
\end{tabular}
\end{center}
\end{table*}
\begin{table*}[t]
\begin{center}
\caption{Compatibility verification of our method with dropout variant, label smoothing and its variant on accuracy(\%). Note that M, FM, C10 and C100 represent MNIST, Fashion-MNIST, CIFAR10 and CIFAR100 respectively.}
\label{tab:natural-training-compatibility-comparison}
\begin{tabular}{ccccccccccccc}
\hline
\multirow{3}{*}{Methods} & \multicolumn{4}{c}{MNISTNet}                                      & \multicolumn{4}{c}{VGG16}                                         & \multicolumn{4}{c}{ResNet50}                                      \\
                        & \multicolumn{2}{c}{Cyclic}      & \multicolumn{2}{c}{Multistep}   & \multicolumn{2}{c}{Cyclic}      & \multicolumn{2}{c}{Multistep}   & \multicolumn{2}{c}{Cyclic}      & \multicolumn{2}{c}{Multistep}   \\
                        & M              & FM             & M              & FM             & C10            & C100           & C10            & C100           & C10            & C100           & C10            & C100           \\ \hline
DC\cite{Dropconnect}                      & 99.50          & 92.18          & 99.50          & 92.60          & 93.08          & 71.39          & \textbf{93.52} & 71.55          & 92.89          & 67.27          & 94.50          & 76.01          \\
DC+MDL (ours)           & \textbf{99.55} & \textbf{92.80} & \textbf{99.56} & \textbf{93.21} & \textbf{93.55} & \textbf{71.86} & 93.33          & \textbf{72.11} & \textbf{94.47} & \textbf{72.98} & \textbf{94.55} & \textbf{77.11} \\ \hline
LS\cite{Label-Smoothing}                      & 99.47          & 93.43          & 99.55          & 93.53          & 93.17          & 72.38          & \textbf{93.49} & 72.39          & 92.51          & \textbf{69.19} & 93.99          & \textbf{76.15} \\
LS+MDL (ours)           & \textbf{99.48} & \textbf{93.55} & \textbf{99.55} & \textbf{94.05} & \textbf{93.37} & \textbf{72.41} & 93.42          & \textbf{72.46} & \textbf{93.36} & 68.34          & \textbf{94.58} & 76.05          \\ \hline
OLS\cite{Online-Label-Smoothing}                     & \textbf{99.55} & 93.37          & 99.57          & 93.75          & 93.26          & 72.51          & \textbf{93.69} & 72.35          & 92.55          & \textbf{70.23} & 94.47          & 76.00          \\
OLS+MDL (ours)          & 99.52          & \textbf{93.53} & \textbf{99.58} & \textbf{93.88} & \textbf{93.34} & \textbf{72.91} & 93.44          & \textbf{72.72} & \textbf{93.42} & 69.29          & \textbf{94.63} & \textbf{76.42} \\ \hline
\end{tabular}
\end{center}
\end{table*}
\begin{table}[t]
\begin{center}
\caption{The average accuracy (\%) and build time (mins) of three runs. The type of DNN is MNISTNet on FashionMNIST and VGG16 on CIFAR10. The learning rate strategy is Cyclic.}
\label{tab:average-accuracy-for-three-times}
\begin{tabular}{ccccc}
\hline
            & \multicolumn{2}{c}{FashionMNIST}  & \multicolumn{2}{c}{CIFAR10}       \\ \hline
Methods     & Accs                & Time & Accs                & Time \\ \hline
Dropout~\cite{Dropout}     & 93.38+0.13          & 9.1         & 93.16+0.11          & 144         \\
FL~\cite{Focal-Loss}          & 93.01+0.13          & 9.7         & 92.85+0.25          & 161         \\
Dropblock~\cite{Dropblock}   & 92.67+0.12          & 9.5         & 93.27+0.17          & 158         \\
MSD~\cite{Multi-Sample-Dropout}         & 92.84+0.08          & 11.4        & 93.18+0.27          & 191         \\
Disout~\cite{Beyond-Dropout}      & 92.02+0.13          & 9.3         & 93.02+0.14          & 159         \\
LR~\cite{Label-Relaxation}          & 92.88+0.11          & 10.0        & 92.42+0.10          & 163         \\
DC~\cite{Dropconnect} & 92.00+0.16          & 10.7        & 93.08+0.04          & 144         \\
LS~\cite{Label-Smoothing}          & 93.43+0.15          & 9.3         & 93.07+0.11          & 159         \\
OLS~\cite{Online-Label-Smoothing}         & 93.42+0.11          & 9.8         & 93.09+0.28          & 160         \\
MDL (ours)         & \textbf{93.51+0.05} & 19.4        & \textbf{93.41+0.11} & 193         \\ \hline
\end{tabular}
\end{center}
\end{table}
\begin{figure*}[t]
\centering
\includegraphics[width=0.99\textwidth]{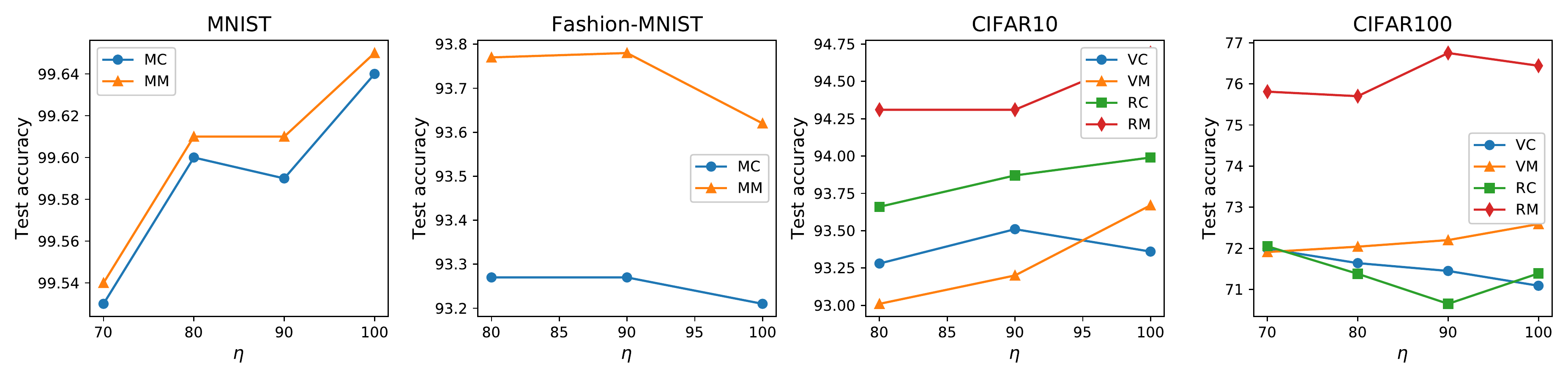} 
\caption{Relationship between the parameter $\eta$ and accuracy (\%). Note that MC, MM, VC, VM, RC and RM represent MNIST-Net with Cyclic and Multistep, VGG16 with Cyclic and Multistep, ResNet50 with Cyclic and Multistep, respectively.}
\label{fig:diff-eta}
\end{figure*}
\begin{figure*}[t]
\centering
\includegraphics[width=0.99\textwidth]{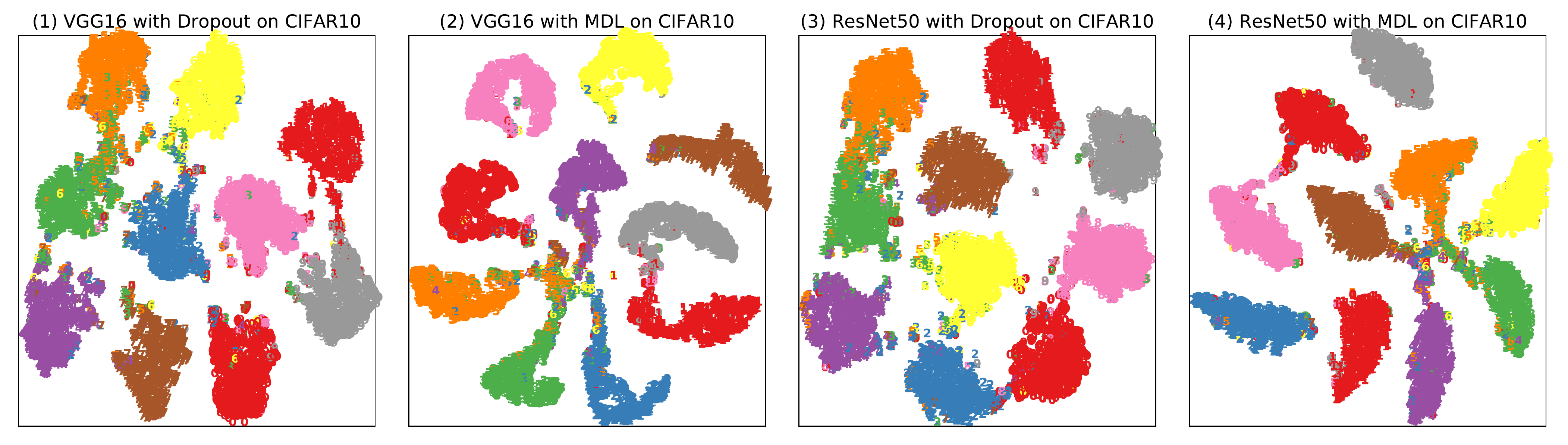} 
\caption{The t-SNE visualization of the model logits on CIFAR10. The subgraph (1), (2), (3) and (4) are VGG16 and ResNet50 without and with MDL, respectively.}
\label{fig:tSNE-of-model-logits}
\end{figure*}
\begin{figure*}[t]
\centering
\includegraphics[width=0.99\textwidth]{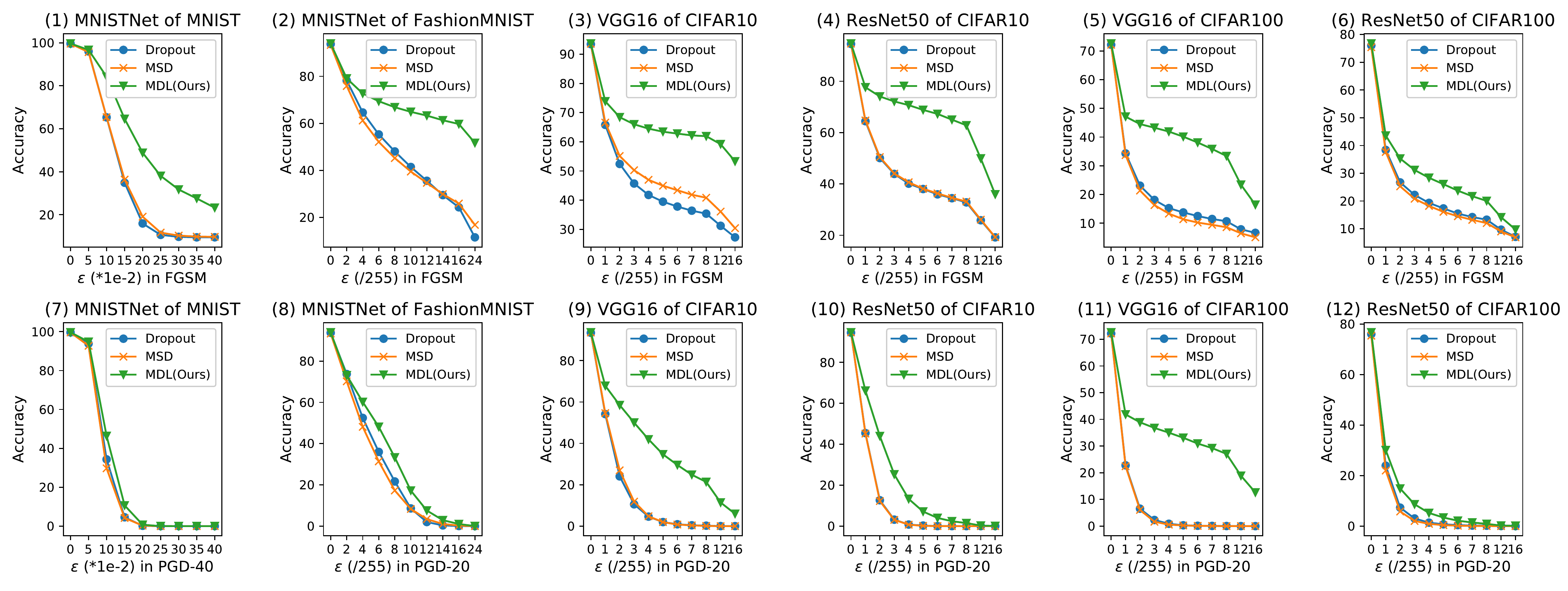} 
\caption{The accuracy (\%) lines of dropout, MSD and our MDL on 4 datasets under FGSM and PGD attacks with different strength, respectively. The first row of subgraphs are under FGSM attack, the second row are under PGD attack.}
\label{fig:dropout-natural-lines}
\end{figure*}
\begin{figure}[t]
\centering
\includegraphics[width=0.99\columnwidth]{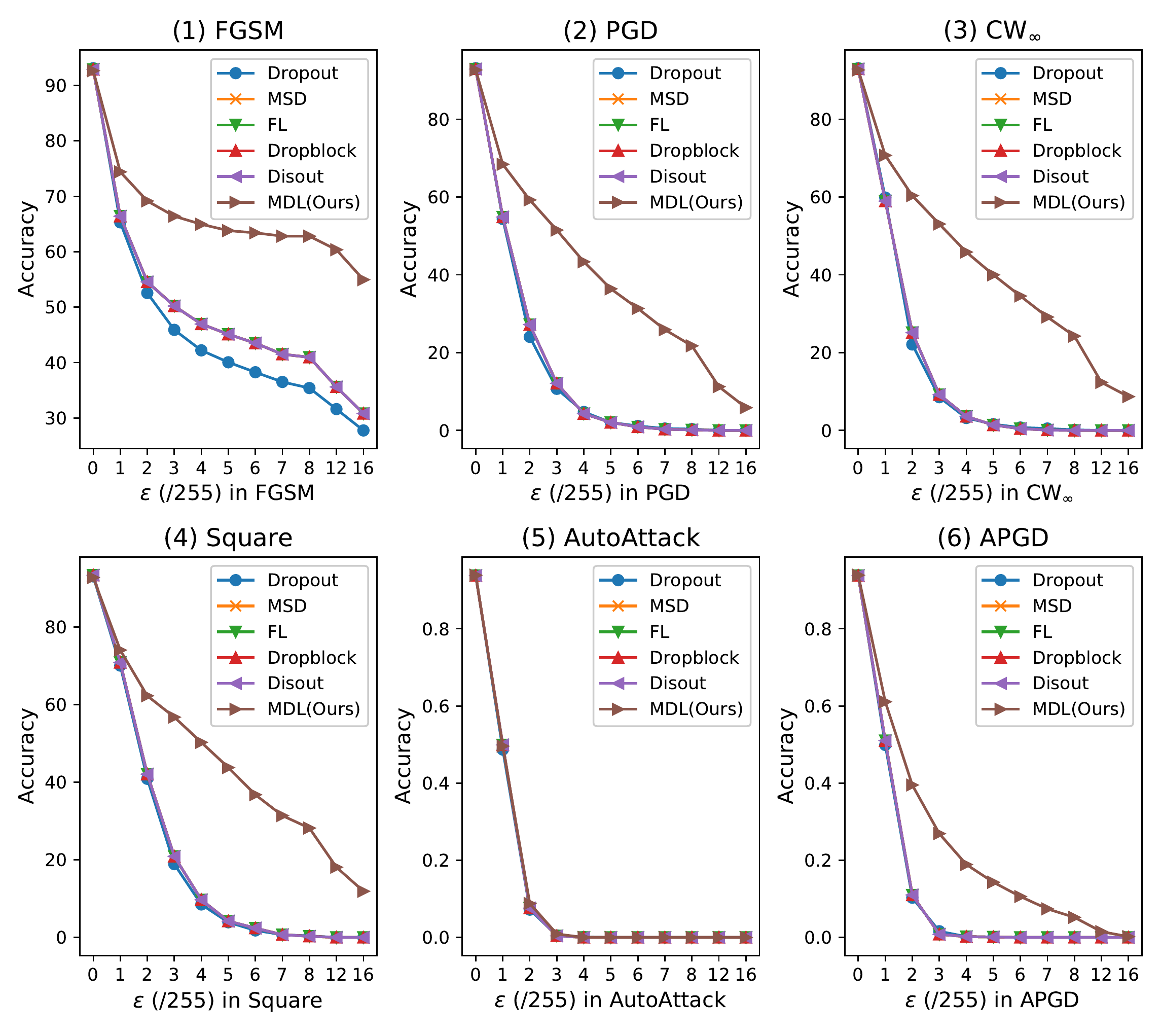} 
\caption{The accuracy (\%) lines of more regularization methods on CIFAR10 under more attacks with different strength, respectively. The type of DNN is VGG16.}
\label{fig:more_attacks_natural_lines}
\end{figure}
\begin{figure*}[t]
\centering
\includegraphics[width=0.99\textwidth]{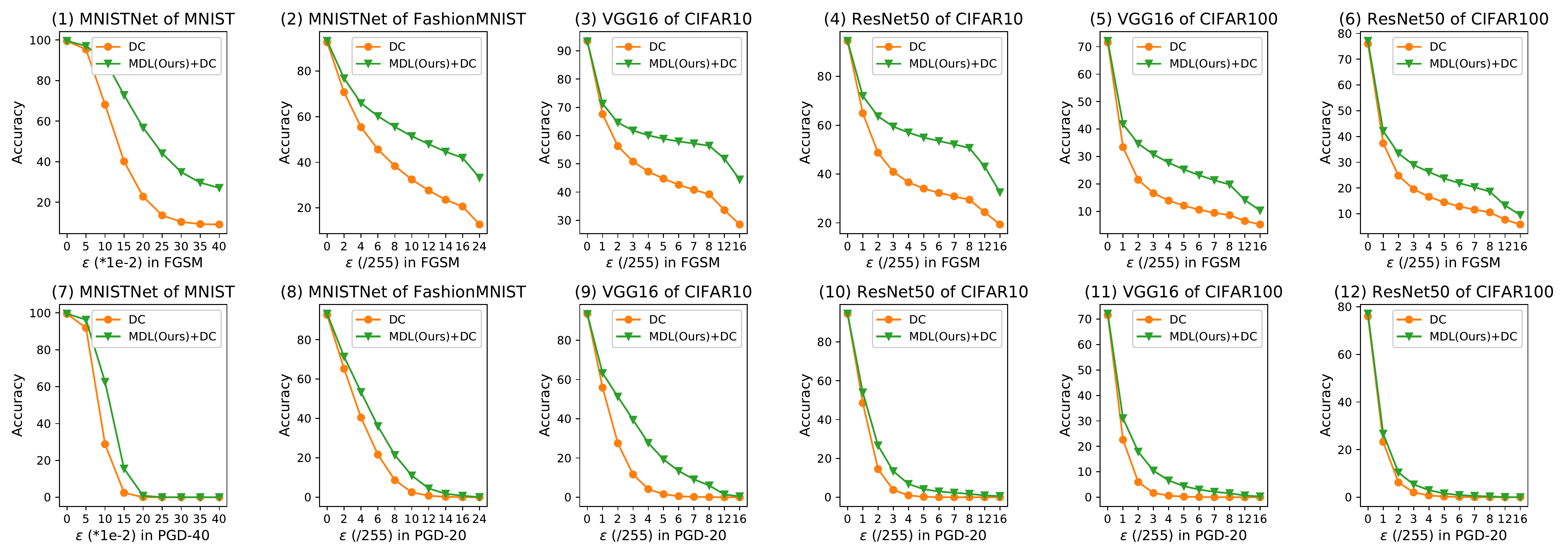} 
\caption{The accuracy (\%) lines of DC and the combination of both MDL and DC on 4 datasets under FGSM and PGD attacks with different strength, respectively. The first row of subgraphs are under FGSM attack, the second row are under PGD attack.}
\label{fig:dropconnect-natural-lines}
\end{figure*}
\begin{table}[t]
\begin{center}
\caption{The number of wrong categories with probability less than the theoretical CT (i.e. $\frac{1}{C-1}$) on test set. FMNIST denotes FashionMNIST.}
\label{tab:statistical-analysis-of-CT}
\begin{tabular}{ccccc}
\hline
Methods   & MNIST          & FMNIST   & CIFAR10        & CIFAR100        \\ \hline
Dropout~\cite{Dropout}   & 89858          & 88545          & 88987          & 977710          \\
FL~\cite{Focal-Loss}        & 89608          & 86465          & 88465          & 966173          \\
Dropblock~\cite{Dropblock} & 89867          & 88566          & 88972          & 974991          \\
MSD~\cite{Multi-Sample-Dropout}       & 89860          & 88680          & 88993          & 976665          \\
Disout~\cite{Beyond-Dropout}    & 89855          & 88750          & 88952          & 975592          \\
LR~\cite{Label-Relaxation}        & 89834          & 89009          & 88773          & 965481          \\
DC~\cite{Dropconnect}        & 89858          & 88764          & 89002          & 975522          \\
LS~\cite{Label-Smoothing}        & 89793          & 88293          & 88951          & 968836          \\
OLS~\cite{Online-Label-Smoothing}       & 89809          & 88346          & 88977          & 969770          \\
MDL (ours) & \textbf{89942} & \textbf{89028} & \textbf{89140} & \textbf{984153} \\ \hline
\end{tabular}
\end{center}
\end{table}
\begin{figure}[t]
\centering
\includegraphics[width=0.99\columnwidth]{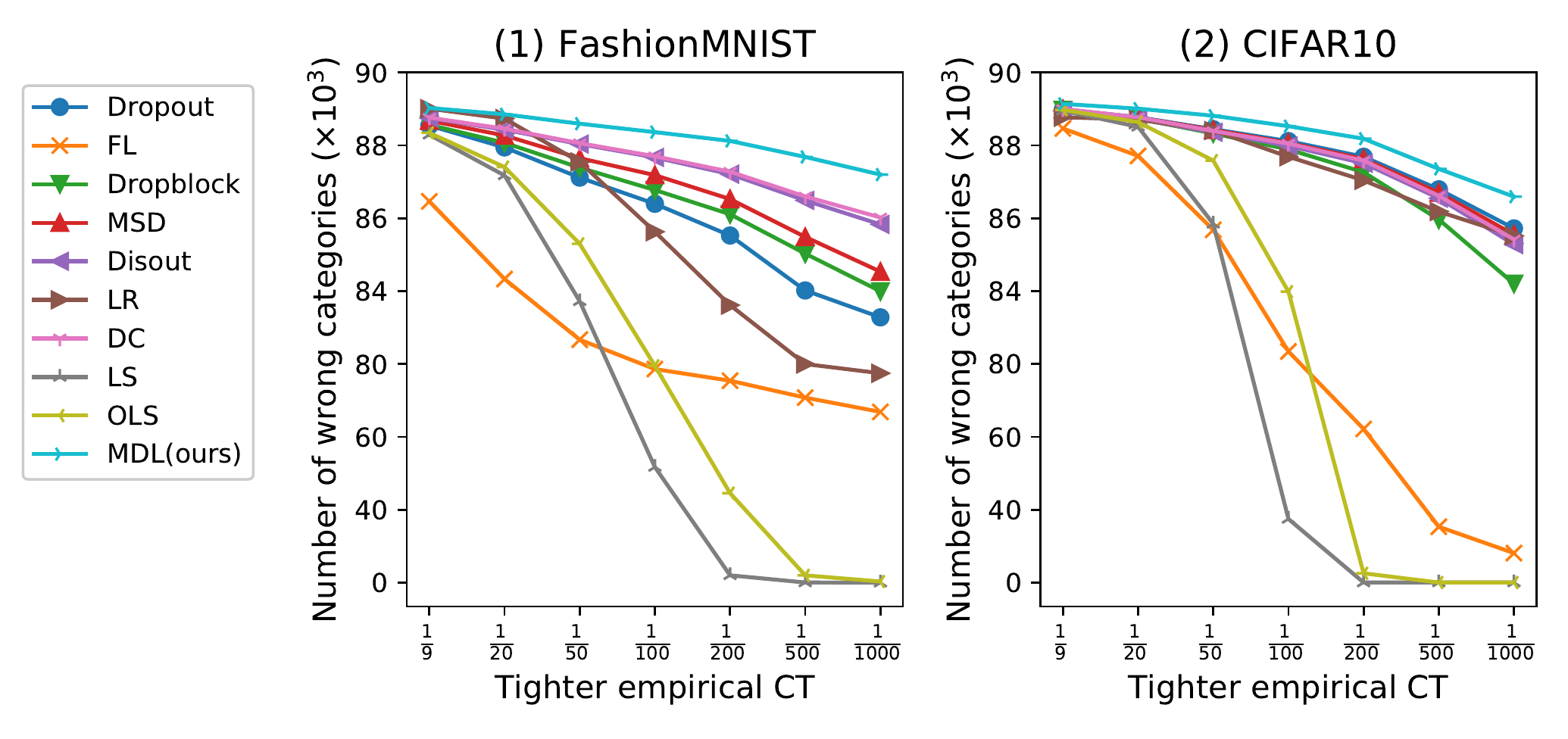} 
\caption{The exploration of the tighter empirical CT on FashionMNIST and CIFAR10 test set. The y-axis denotes the number of wrong categories with probability less than the empirical CT of the x-axis.}
\label{fig:tighter-empirical-CT}
\end{figure}

\subsection{Natural Training with CTR}
\label{sec:natural-training-with-CTR}
The setup is firstly introduced in this section. Then the generalization and robustness evaluations of the MDL loss are analyzed for natural training.

\subsubsection{Setup}
\label{sec:setup1}
The effectiveness of the proposed mask-guided divergence loss function is evaluated with MNIST~\cite{MNIST}, FashionMNIST~\cite{Fashion-MNIST}, CIFAR10~\cite{CIFAR} and CIFAR100~\cite{CIFAR} datasets for natural training. MNISTNet~\cite{MNIST-Net}, which is a 7-layer network with 4 convolutional layers and 3 fully connected  layers, is used to evaluate the generalization and robustness of the MDL on MNIST and FashionMNIST. VGG16~\cite{VGG16} and ResNet50~\cite{ResNet} are used to evaluate the generalization and robustness of the MDL on CIFAR10 and CIFAR100.

All DNNs are trained with a batch size of 128 using stochastic gradient descent (SGD) with 0.9 momentum and $5\cdot 10^{-4}$ decay rate. In natural training, MNISTNet trains 20 epochs on MNIST and 50 epochs on FashionMNIST, and both VGG16 and ResNet50 train 200 epochs on CIFAR10 and CIFAR100, respectively. 

In addition, two popular learning rate schedulers are used: Multistep and Cyclic~\cite{Cyclic}. For Multistep, the learning rate is initially set as 0.01 and decays at the $\frac{1}{2}$ and $\frac{3}{4}$ of the total epochs by a factor of 0.1. For Cyclic, the max and min learning rates are set as 0.01 and 0, respectively. The metric of all evaluations is the accuracy (\%). In the evaluation of our MDL, the number of sub-DNN $K$ in each training step is set as 4, the dropout rate is 0.5, $\eta$ choose from $\left\{100, 90, 80, 70\right\}$ and the weight coefficient $\rho$ is 1.

\textbf{Generalization Evaluation Setup.} The 9 compared methods in the evaluation are consist of 5 model-level methods (i.e., dropout (DO)~\cite{Dropout}, dropconnect (DC)~\cite{Dropconnect}, dropblock (DB)~\cite{Dropblock}, multi-sample dropout (MSD)~\cite{Multi-Sample-Dropout}, disout~\cite{Beyond-Dropout}) and 4 loss-level methods (i.e., label smoothing (LS)~\cite{Label-Smoothing}, focal loss (FL)~\cite{Focal-Loss}, label relaxation (LR)~\cite{Label-Relaxation}, online label smoothing (OLS)~\cite{Online-Label-Smoothing}). The training parameters of these compared methods are selected from the original experiment. In addition, the cosine similarity based diversity $D_1(\boldsymbol{P}_s^1, \boldsymbol{P}_s^2)$ in Eq.~\ref{eq:CS-based-diversity} replaces with Pearson correlation coefficient (PCC) based diversity $D_2(\boldsymbol{P}_s^1, \boldsymbol{P}_s^2)$, called MDL-PCC:
\begin{align}\label{eq:PCC}
Pe(\boldsymbol{P}_s^1, \boldsymbol{P}_s^2) =\frac{\sum_{i=1}^{M-1}(\bar{p}_s^{1i}-u(\boldsymbol{\bar{P}}_s^1))(\bar{p}_s^{2i}-u(\boldsymbol{\bar{P}}_s^2))}{\Vert \boldsymbol{\bar{P}}_s^1-u(\boldsymbol{\bar{P}}_s^1)\Vert_2\Vert \boldsymbol{\bar{P}}_s^2-u(\boldsymbol{\bar{P}}_s^2)\Vert_2}
\end{align}
\begin{align}\label{eq:PCC-based-diversity}
D_2(\boldsymbol{P}_s^1, \boldsymbol{P}_s^2) = \left(Pe(\boldsymbol{P}_s^1, \boldsymbol{P}_s^2)+1\right)/2
\end{align}
where $\bar{p}_s^{1i}$ and $\bar{p}_s^{2i}$ are the $i^{th}$ element in $\boldsymbol{\bar{P}}_s^1$ and $\boldsymbol{\bar{P}}_s^2$, $\Vert\cdot\Vert_2$ is \textit{l2} norm, $u(\cdot)$ is the average of vector. The smaller $D_2(\boldsymbol{P}_s^1, \boldsymbol{P}_s^2)$, the greater the diversity.

\textbf{Robustness Evaluation Setup.} FGSM~\cite{FGSM}, PGD~\cite{PGD}, APGD~\cite{AutoAttack}, CW$_\infty$~\cite{CW}, Square~\cite{Square} and AutoAttack (AA)~\cite{AutoAttack} with different attack strength, e.g., $\varepsilon=[1/255,2/255,\cdots,8/255,12/255,16/255]$ on CIFAR10 and CIFAR100, $\varepsilon=[0.05,0.1,\cdots,0.3,0.35,0.4]$ on MNIST and $\varepsilon=[2/255,4.255,\cdots,16/255,24/255]$ on FashionMNIST, are used to evaluate the robustness of our MDL for natural training. Note that PGD with 40 steps (i.e., PGD-40) on MNIST and PGD with 20 steps (i.e., PGD-20) on FashionMNIST, CIFAR10, and CIFAR100 are used. On CIFAR10, the max iteration and learning rate of CW$_\infty$ are 10 and 0.01, respectively, the max number of queries of Square is 5000, the max iteration of APGD is 100, and the version of AutoAttack is standard. The first 2000 inputs of the testset are used as the evaluation set on white-box attacks except for AutoAttack, and the first 1000 inputs of the testset are used as the evaluation set on black-box attacks and AutoAttack.

\subsubsection{Generalization Evaluation of MDL}
\label{sec:generalization-evaluation-MDL}
\textbf{Comparison among MDL, Dropout, Label Smoothing and Their Variants.}
Tables~\ref{tab:natural-training-comparison} and \ref{tab:natural-training-compatibility-comparison} show the accuracies of MDL, MDL-PCC and the other 9 compared regularization methods. In Table~\ref{tab:natural-training-comparison}, the average rank of MDL is the best, and MDL-PCC is the second, which shows that the improved diversity of the DNN can improve the generalization performance, and the cosine similarity based diversity is better than the Pearson correlation coefficient based diversity. As shown in Tables~\ref{tab:natural-training-comparison} and \ref{tab:natural-training-compatibility-comparison}, the improvements of MDL compared with the maximum accuracy of the 9 regularization methods are 0.05\%, 0.1\%, 0.05\%, 0.99\%, 1.38\% on MNIST, FashionMNIST, CIFAR10 (both VGG16 and ResNet50 of CIFAR10), and CIFAR100 (ResNet50 of CIFAR100) by using the Cyclic scheduler, respectively, and 0.04\%, 0.2\%, 0.03\%, 0.2\%, 0.6\% on MNIST, FashionMNIST, CIFAR10 (both VGG16 and ResNet50 of CIFAR10), and CIFAR100 (ResNet50 of CIFAR100) by using the Multistep scheduler, respectively. Although OLS has the maximum accuracy on VGG16 of CIFAR100 by using the Cyclic scheduler, the combination of OLS and MDL can improve by 0.4\% to OLS. In summary, MDL has the best comprehensive generalization.

\textbf{The Influence of Random Seed for Test Accuracy.} Table~\ref{tab:average-accuracy-for-three-times} shows that the average test accuracy of MDL is better than that of the other 9 regularization methods on FashionMNIST and CIFAR10.

\textbf{Compatibility between MDL and Other Methods.}
Table~\ref{tab:natural-training-compatibility-comparison} shows the generalization improvements of MDL combined with dropconnect, LS, and OLS on 4 datasets, respectively. For the DNN with dropconnect, MDL can effectively improve the accuracies of the DNNs on 4 datasets and 2 learning schedulers, i.e., 0.05\%, 0.62\%, 0.47\%, 1.58\%, 0.47\%, and 5.71\% on MNIST, FashionMNIST, CIFAR10 (both VGG16 and ResNet50 of CIFAR10) and CIFAR100 (both VGG16 and ResNet50 of CIFAR100) respectively by using the Cyclic scheduler, and 0.06\%, 0.61\%, 0.05\%, 0.56\% and 1.1\% on MNIST, FashionMNIST, CIFAR10 (ResNet50 of CIFAR10), CIFAR100 (both VGG16 and ResNet50 of CIFAR100) respectively by using the Multistep scheduler, except for the accuracy reduction of 0.19\% on VGG16 of CIFAR10 by using the Multistep scheduler. For label smoothing and online label smoothing, MDL can improve the accuracies of the DNNs on MNIST and Fashion-MNIST datasets, and the accuracies of the most DNNs on CIFAR10 and CIFAR100. Therefore, MDL has good compatibility with dropconnect (a variant of dropout) and certain compatibility with label smoothing and online label smoothing (a variant of label smoothing).

\textbf{Impacts of $\eta$ on Generalization.}
Fig.~\ref{fig:diff-eta} shows the impacts of the size of $\eta$ on 4 datasets. As Fig.~\ref{fig:diff-eta} shows, the parameter $\eta$ has a certain impact on the accuracies of all DNNs on 4 datasets. Generally, when the size of $\eta\%$ is equal or near to the percentage of the accuracy, the DNN using MDL can obtain the optimal or sub-optimal generalization, which saves the time consumption on adjusting the parameter $\eta$. For example, for MNIST, FashionMNIST, CIFAR10, and CIFAR100, when $\eta\%$ is set to $100\%$, $90\%$, $100\%$, and $70\%$ near the percentage of the accuracies, respectively (i.e., $99\%$, $93\%$, $94\%$, and $72\%$), all DNNs can obtain the optimal generalization. Therefore, the optimal generalization of the DNN can be achieved by setting the parameter $\eta\%$ as the percentage of the accuracy approximately. Note that the initial value of the parameter $\eta$ is set to $100$.

\textbf{The t-SNE Visualization of Features Extracted from the DNNs Trained by Different Methods.}
To study the influence of MDL on feature extraction, we visualize the model logits of the CIFAR10 test dataset by using t-SNE~\cite{t-SNE}. Fig.~\ref{fig:tSNE-of-model-logits}(1) and Fig.~\ref{fig:tSNE-of-model-logits}(2) are the t-SNE visualization of VGG16 with Dropout and MDL, respectively. Fig.~\ref{fig:tSNE-of-model-logits}(3) and Fig.~\ref{fig:tSNE-of-model-logits}(4) are the t-SNE visualization ResNet50 with Dropout and MDL, respectively. In Fig.~\ref{fig:tSNE-of-model-logits}(1) and Fig.~\ref{fig:tSNE-of-model-logits}(3), there are many separate small clusters, which verifies that the DNN with dropout has learned more complex classification boundaries. However, in Fig.~\ref{fig:tSNE-of-model-logits}(2) and Fig.~\ref{fig:tSNE-of-model-logits}(4), due to a few separate small clusters, the classification boundaries of the DNN with MDL are simple. Therefore, dropout with MDL can achieve better generalization. Note that because MDL can ensure that the probability of each false category is less than $\frac{1}{M-1}$, the probabilities of all categories are close around $\frac{1}{M-1}$ for wrong classification samples and low confidence samples, thereby resulting in a radial distribution in Fig.~\ref{fig:tSNE-of-model-logits}(2) and Fig.~\ref{fig:tSNE-of-model-logits}(4).

\begin{table*}[t]
\begin{center}
\caption{The accuracies (\%) of TRADES~\cite{TRADES} with or without CTR on each attacks. Time denotes build time (mins). Bold indicates higher performance.}
\label{tab:accuracies-of-TRADES}
\begin{tabular}{ccccccccccccc}
\hline
                               &      & \multicolumn{1}{c|}{}     & \multicolumn{5}{c|}{Multistep}                                                               & \multicolumn{5}{c}{Cyclic}                                               \\ \hline
Methods                        & $\beta$ & \multicolumn{1}{c|}{$\gamma$} & Natural        & FGSM           & PGD            & AA            & \multicolumn{1}{c|}{Time} & Natural        & FGSM           & PGD            & AA            & Time  \\ \hline
TRADES w/o CTR~\cite{TRADES}                & 1    & 0                         & 86.67          & 54.5           & 46.75          & 41.9          & 627.9                     & 86.23          & 54.7           & 47.4           & 42.9          & 633.8 \\ \cline{1-1}
\multirow{9}{*}{TRADES w/ CTR (ours)} & 1    & 1                         & 86.85          & 56.6           & 47.65          & 41.7          & 667.0                     & 86.52          & 55.15          & 48.3           & 42.7          & 627.5 \\
                               & 1    & 2                         & 87.11          & 55.56          & 47.85          & 42.6          & 646.9                     & 87.0           & 55.05          & 46.8           & 41.3          & 646.3 \\
                               & 1    & 3                         & 87.89          & 55.55          & 47.9           & 41.3          & 628.5                     & 87.48          & 55.3           & 46.7           & 40.1          & 625.6 \\
                               & 1    & 4                         & 88.80          & 51.4           & 42.8           & 36.4          & 648.3                     & 87.65          & 55.5           & 47.55          & 40.7          & 656.4 \\
                               & 1    & 5                         & 88.27          & 49.95          & 39.3           & 28.8          & 647.2                     & 90.02          & 49.3           & 36.65          & 28.8          & 620.2 \\
                               & 2    & 3                         & \textbf{86.76} & \textbf{57.2}  & \textbf{49.35} & \textbf{43.6} & 640.3                     & \textbf{86.37} & \textbf{57.15} & \textbf{49.95} & \textbf{44.9} & 626.7 \\
                               & 3    & 3                         & \textbf{86.11} & \textbf{58.15} & \textbf{51.35} & \textbf{44.6} & 649.4                     & \textbf{85.25} & \textbf{58.4}  & \textbf{51.35} & \textbf{45.3} & 623.2 \\
                               & 4    & 3                         & \textbf{85.91} & \textbf{59.65} & \textbf{52.75} & \textbf{46.1} & 639.8                     & \textbf{85.08} & \textbf{58.15} & \textbf{52.65} & \textbf{46.0} & 633.1 \\
                               & 5    & 3                         & \textbf{85.18} & \textbf{58.55} & \textbf{52.15} & \textbf{46.8} & 645.8                     & \textbf{85.23} & \textbf{58.4}  & \textbf{53.35} & \textbf{46.9} & 622.2 \\ \cline{1-1}
TRADES w/o CTR~\cite{TRADES}                & 6    & 0                         & 81.88          & 58.35          & 54.15          & 47.1          & 626.8                     & 82.03          & 57.65          & 52.8           & 48.0          & 626.9 \\ \cline{1-1}
\multirow{7}{*}{TRADES w/ CTR (ours)} & 6    & 1                         & \textbf{83.52} & \textbf{59.3}  & \textbf{54.2}  & \textbf{47.5} & 655.6                     & 83.23          & 57.1           & 52.45          & 47.7          & 620.9 \\
                               & 6    & 2                         & 84.37          & 59.45          & 53.95          & 47.0          & 631.1                     & 84.15          & 58.2           & 53.05          & 47.7          & 641.3 \\
                               & 6    & 3                         & 84.97          & 58.15          & 52.75          & 47.1          & 651.5                     & \textbf{84.82} & \textbf{58.7}  & \textbf{52.65} & \textbf{48.6} & 626.8 \\
                               & 6    & 4                         & 86.25          & 58.95          & 52.65          & 46.2          & 638.8                     & 85.45          & 58.05          & 52.85          & 46.4          & 622.2 \\
                               & 6    & 5                         & 86.47          & 57.95          & 51.75          & 45.8          & 644.2                     & 85.58          & 59.1           & 52.25          & 46.5          & 645.6 \\
                               & 7    & 1                         & \textbf{83.15} & \textbf{58.2}  & \textbf{53.45} & \textbf{48.1} & 636.7                     & \textbf{82.80} & \textbf{58.05} & \textbf{53.5}  & \textbf{48.6} & 636.0 \\
                               & 8    & 1                         & 82.95          & 58.75          & 54.3           & 47.8          & 633.9                     & 82.69          & 57.65          & 53.1           & 48.1          & 638.6 \\ \hline
\end{tabular}
\end{center}
\end{table*}

\subsubsection{Robustness Evaluation of MDL}
\label{sec:robustness-evaluation-MDL}
\textbf{Comparison among MDL, Dropout and MSD.}
Fig.~\ref{fig:dropout-natural-lines} evaluated the robustness of the DNN with different regularization methods on 4 datasets. Under FGSM and PGD attacks with any strength, the accuracy of MDL is significantly greater than that of dropout and MSD. Therefore, MDL significantly improves the robustness of the DNN in natural training. In addition, under FGSM ($\varepsilon$=8/255) attack, the accuracies of VGG16 and ResNet50 with MDL on CIFAR10 are 61.92\%, 62.84\%, respectively, which have reached a comparable level with that of adversarial training methods.

\textbf{Comparison among More Regularization Methods and More Attack Methods on CIFAR10.}
Fig.~\ref{fig:more_attacks_natural_lines} shows that the robustness of the other regularization methods is basically the same. Under all white-box attacks (except AutoAttack) and all black-box attacks with any attack strength, the accuracy of MDL is significantly greater than that of the other regularization methods. Therefore, The existing regularization methods do not improve the robustness of the DNN, but MDL can significantly improve the robustness of the DNN in natural training on most of white-box attacks and black-box attacks.

In addition, Table~\ref{tab:average-accuracy-for-three-times} shows that MDL slightly increases the build time of the DNN. However, in comparison with the improvement of generalization and robustness, the increased build time of MDL is acceptable.

\textbf{Compatibility between MDL and Dropconnect.}
To verify that MDL is compatible with dropconnect, the dropout operator in MDL replaces with dropconnect. Fig.~\ref{fig:dropconnect-natural-lines} evaluated the robustness of the DNN trained by the combination of MDL and dropconnect on 4 datasets. Under FGSM and PGD attacks with any strength, the accuracy of the combination of MDL and dropconnect is significantly greater than that of only dropconnect. Therefore, the combination of MDL and dropconnect does not affect the significant improvement in the robustness of the DNN.

\subsubsection{Statistical Analysis of MDL}
\label{sec:statistical-analysis-MDL}
\textbf{The Statistical Verification of the Theoretical CT.} According to Section~\ref{sec:generalization-evaluation-MDL}, MDL makes more test inputs to be classified correctly. Meanwhile, as shown in Table~\ref{tab:statistical-analysis-of-CT}, MDL makes the probability of more wrong categories less than the theoretical CT (i.e. $\frac{1}{C-1}$). Therefore, the theoretical analysis of Section~\ref{sec:theoretical-analysis-MDL} is correct.

\textbf{The Exploration of Tighter Empirical CT.} To explore a CT (i.e., the tighter empirical CT) of MDL may be less than the theoretical CT, as shown in Fig.~\ref{fig:tighter-empirical-CT}, with the decreasing of the empirical CT from the theoretical CT, i) the number of wrong categories with probability less than the empirical CT decreases slowly; ii) the gap between MDL and the other methods is gradually widened in the number of wrong categories with probability less than the empirical CT. Therefore, a tighter empirical CT exists, and the effectiveness of MDL is empirically verified in reducing the CT.

\begin{table*}[t]
\begin{center}
\caption{The accuracies (\%) of Fast-AT~\cite{Fast-AT}, Free-AT~\cite{Free-AT} and Madry-AT~\cite{PGD} with or without CTR on each attacks. Time denotes build time (mins). Bold indicates higher performance. Note that due to the overfit of the model, Fast-AT without CTR runs 40 epochs on Multistep.}
\label{tab:accuracies-of-Fast-Free-Madry}
\begin{tabular}{ccccccccccccc}
\hline
                                  &      & \multicolumn{1}{c|}{}       & \multicolumn{5}{c|}{Multistep}                                                               & \multicolumn{5}{c}{Cyclic}                                               \\ \hline
Methods                           & $\gamma$ & \multicolumn{1}{c|}{Epochs} & Natural        & FGSM           & PGD            & AA            & \multicolumn{1}{c|}{Time} & Natural        & FGSM           & PGD            & AA            & Time  \\ \hline
Fast-AT w/o CTR~\cite{Fast-AT}                  & 0    & 40/50                       & 83.82          & 53.95          & 47.1           & 40.8          & 45.7                      & 84.46          & 54.0           & 46.05          & 40.2          & 52.9  \\ \cline{1-1}
\multirow{7}{*}{Fast-AT w/ CTR (ours)}   & 1    & 50                          & 84.78          & 55.65          & 48.6           & 41.2          & 53.3                      & 84.89          & 54.6           & 46.45          & 40.5          & 54.0  \\
                                  & 2    & 50                          & \textbf{84.34} & \textbf{55.55} & \textbf{48.65} & \textbf{42.4} & 57.6                      & \textbf{84.34} & \textbf{55.45} & \textbf{48.25} & \textbf{42.3} & 53.9  \\
                                  & 3    & 50                          & 83.31          & 58.1           & 50.4           & 41.0          & 53.3                      & 84.17          & 57.1           & 49.95          & 41.8          & 53.8  \\
                                  & 4    & 50                          & \textbf{81.26} & \textbf{57.5}  & \textbf{53.7}  & \textbf{42.5} & 53.1                      & \textbf{81.95} & \textbf{57.55} & \textbf{53.45} & \textbf{42.5} & 54.0  \\
                                  & 5    & 50                          & 77.93          & 56.15          & 52.05          & 41.3          & 53.1                      & 77.21          & 55.0           & 52.05          & 41.9          & 53.8  \\
                                  & 4    & -                           & 83.37          & 58.9           & 53.05          & 41.1          & 74.3                      & 83.68          & 58.2           & 52.45          & 41.4          & 75.6  \\
                                  & 5    & -                           & 81.05          & 57.05          & 51.4           & 40.4          & 84.9                      & 81.58          & 57.55          & 53.4           & 40.9          & 96.8  \\ \hline
Free-AT w/o CTR~\cite{Free-AT}                  & 0    & 50                          & 83.3           & 53.95          & 49.65          & 43.0          & 179.7                     & 83.36          & 55.85          & 50.55          & 45.5          & 183.8 \\ \cline{1-1}
\multirow{8}{*}{Free-AT w/ CTR (ours)}   & 1    & 50                          & 83.14          & 55.25          & 50.2           & 43.4          & 182.2                     & \textbf{83.39} & \textbf{56.5}  & \textbf{51.95} & \textbf{46.3} & 179.8 \\
                                  & 2    & 50                          & 82.33          & 55.8           & 51.4           & 44.1          & 179.9                     & 83.31          & 57.1           & 52.7           & 45.2          & 178.8 \\
                                  & 3    & 50                          & 81.25          & 56.25          & 53.1           & 42.8          & 181.3                     & 82.81          & 58.7           & 54.75          & 45.6          & 182.5 \\
                                  & 4    & 50                          & 77.72          & 54.2           & 51.5           & 41.3          & 178.9                     & 81.04          & 57.7           & 54.85          & 44.3          & 182.7 \\
                                  & 1    & -                           & 83.29          & 55.95          & 50.7           & 44.4          & 218.6                     & \textbf{84.15} & \textbf{56.55} & \textbf{51.85} & \textbf{46.2} & 251.7 \\
                                  & 2    & -                           & \textbf{83.41} & \textbf{56.95} & \textbf{52.05} & \textbf{45.1} & 251.8                     & \textbf{83.84} & \textbf{58.05} & \textbf{53.05} & \textbf{46.5} & 250.3 \\
                                  & 3    & -                           & \textbf{82.43} & \textbf{58.25} & \textbf{54.0}  & \textbf{45.2} & 253.8                     & 83.69          & 58.8           & 54.8           & 45.7          & 255.5 \\
                                  & 4    & -                           & 79.89          & 56.0           & 52.0           & 42.7          & 322.0                     & 82.31          & 59.65          & 56.3           & 45.0          & 328.8 \\ \hline
Madry-AT w/o CTR~\cite{PGD}                 & 0    & 50                          & 82.29          & 56.1           & 50.5           & 47.0          & 179.3                     & 82.93          & 55.1           & 50.2           & 47.0          & 178.9 \\ \cline{1-1}
\multirow{10}{*}{Madry-AT w/ CTR (ours)} & 1    & 50                          & 82.2           & 57.4           & 52.45          & 46.6          & 174.4                     & 82.79          & 56.4           & 51.35          & 46.1          & 180.3 \\
                                  & 2    & 50                          & 81.88          & 58.55          & 54.35          & 46.5          & 178.8                     & 82.41          & 57.5           & 52.95          & 46.7          & 175.9 \\
                                  & 3    & 50                          & 80.69          & 59.2           & 56.33          & 47.1          & 179.5                     & \textbf{81.47} & \textbf{59.85} & \textbf{55.55} & \textbf{47.2} & 174.2 \\
                                  & 4    & 50                          & 77.87          & 58.3           & 55.75          & 46.8          & 176.1                     & 78.58          & 57.85          & 55.15          & 45.2          & 179.1 \\
                                  & 5    & 50                          & 73.74          & 55.1           & 52.45          & 43.1          & 174.0                     & 72.5           & 54.35          & 51.9           & 42.1          & 180.0 \\
                                  & 1    & -                           & 82.76          & 57.15          & 53.0           & 46.8          & 191.8                     & 83.9           & 56.85          & 51.15          & 45.3          & 252.4 \\
                                  & 2    & -                           & \textbf{82.69} & \textbf{58.05} & \textbf{53.85} & \textbf{47.5} & 196.6                     & 83.4           & 57.35          & 51.35          & 45.1          & 246.2 \\
                                  & 3    & -                           & 82.72          & 60.3           & 56.7           & 45.9          & 251.3                     & 83.39          & 58.55          & 54.15          & 45.5          & 243.8 \\
                                  & 4    & -                           & 81.32          & 59.05          & 55.3           & 44.7          & 281.7                     & 81.94          & 60.7           & 56.5           & 45.8          & 322.3 \\
                                  & 5    & -                           & 78.01          & 57.8           & 55.25          & 43.7          & 278.4                     & 78.7           & 58.75          & 55.5           & 45.4          & 324   \\ \hline
\end{tabular}
\end{center}
\end{table*}

\subsection{Adversarial Training with CTR}
\label{sec:adversarial-training-with-CTR}
In this section, the robustness evaluation of Fast-AT, Free-AT, Madry-AT and TRADES with and without CTR are compared where CTR is implemented by integrating the STD loss into the loss funtion of original adversarial training. Note that the setup not mentioned in this section is the same as Section~\ref{sec:setup1}.

\subsubsection{Setup}
\label{sec:setup2}
The effectiveness of the proposed standard deviation loss function is evaluated with CIFAR10~\cite{CIFAR} dataset and ResNet18~\cite{ResNet} for adversarial training. Fast-AT~\cite{Fast-AT}, Free-AT~\cite{Free-AT}, Madry-AT~\cite{PGD} and TRADES~\cite{TRADES} are selected as the baselines to evaluate the robustness with and without CTR where the purpose of Fast-AT, Free-AT and Madry-AT methods is to improve  the only robustness of the DNN while the purpose of TRADES is to achieve the trade-off between the robust and natural accuracies.

Because a large number of epochs leads to the overfitting of Fast-AT~\cite{Fast-AT} without CTR (i.e., losing the robustness), Fast-AT with and without CTR train 50 and 40 epochs, respectively. In Madry-AT~\cite{PGD}, the model is basically convergent after 50 epochs of training, and thus Madry-AT with or without CTR trains 50 epochs. Because Free-AT~\cite{Free-AT} is an accelerated version of Madry-AT, the model is convergent after 50 epochs of training with Free-AT, and thus Free-AT with or without CTR trains 50 epochs. The number of the minibatch replays of Free-AT with or without CTR is set as 8. The TRADES with or without CTR trains 200 epochs, which is the same as the original paper of TRADES~\cite{TRADES}.

In addition, for Multistep, due to the initial training difficulties of the SCE and SKL losses, the gradual warmup~\cite{Warmup} is applied, i.e., the learning rate is multiplied by the warmup factor $\kappa_i$:
\begin{align}\label{eq:warmup-factor}
\left\{ \begin{array}{c}
\kappa _{i+1}=\kappa _i\ast \left( 1-\frac{i}{I} \right) +\frac{i}{I}, 1\leq i\leq I \\
\kappa _1=0.001\\
\end{array} \right.
\end{align}
where $I$ is $\frac{1}{10}$ of the total epochs, $\kappa_i$ is the $i$-th epoch of the warmup factor. For Multistep, the learning rate is initially set to 0.1 on Free-AT and TRADES, 0.2 on Fast-AT and Madry-AT, which decay at the $\frac{2}{3}$ and $\frac{5}{6}$ of the total epochs by a factor of 0.1. For Cyclic, the max and min learning rates are respectively set to 0.2 and 0 on Fast-AT, Madry-AT and TRADES, and 0.1 and 0 on Free-AT. In addition to the attacks in Section~\ref{sec:setup1}, momentum iterative FGSM (MIFGSM)~\cite{MIFGSM} is used to evaluate the robustness of the model. The attack strength of all attacks is $\epsilon=\frac{8}{255}$.

\subsubsection{Robustness Evaluation of TRADES w/ or w/o CTR}
\label{sec:robustness-evaluation-of-TRADES}
\textbf{Comparison of TRADES with and without CTR.} As shown in Table~\ref{tab:accuracies-of-TRADES}, in comparison with TRADES without CTR ($\beta$=1)~\cite{TRADES}, TRADES with CTR ($\beta\geq$1 and bold in Table~\ref{tab:accuracies-of-TRADES}) can improve the FGSM accuracy by 2.7\% to 5.15\% on Multistep and 2.45\% to 3.7\% on Cyclic, the PGD accuracy by 2.6\% to 6\% on Multistep and 2.55\% to 5.95\% on Cyclic, the AutoAttack accuracy by 1.7\% to 4.9\% on Multistep and 2.0\% to 4.0\% on Cyclic while keeping the natural accuracy comparable. In comparison with TRADES without CTR ($\beta$=6)~\cite{TRADES}, TRADES with CTR ($\beta\geq$6 and bold in Table~\ref{tab:accuracies-of-TRADES}) can improve the natural accuracy by 1.27\% to 1.64\% on Multistep and 0.77\% to 2.79\% on Cyclic, the AutoAttack accuracy by 0.4\% to 1.0\% on Multistep and 0.6\% on Cyclic while keeping the FGSM accuracy and PGD accuracy almost unchanged. The evaluation of TRADES with and without CTR on more attacks is shown in Appendix A (available in the online supplemental material). Therefore, TRADES with CTR can further improve the robustness of the model.

\textbf{Sensitivity of Hyperparameter $\beta$.} As the hyperparameter $\beta$ increases from 1 to 6, Table~\ref{tab:accuracies-of-TRADES} shows that the natural accuracy of TRADES without CTR decreases faster than that of TRADES with CTR ($\gamma$=3), i.e. 4.79\%$>$2.92\% on Multistep and 4.2\%$>$2.66\% on Cyclic. Therefore, in comparison with TRADES without CTR, TRADES with CTR can increase the hyperparameter $\beta$ to improve the robust accuracy while keeping the natural accuracy almost unchanged.

\textbf{Sensitivity of Hyperparameter $\gamma$.} As the hyperparameter $\gamma$ increases from 1 to 5, for TRADES with CTR, Table~\ref{tab:accuracies-of-TRADES} shows that the natural accuracy increases, and the robust accuracy increases first and then decreases. Therefore, choosing an intermediate value for the hyperparameter $\gamma$ will make the comprehensive performance optimal, such as, $\gamma=$3 for $\beta=$1 and $\gamma=$1 for $\beta=$6 on Multistep.

\textbf{Comparison of Multistep and Cyclic.} The improvement of the robust accuracy on Cyclic is higher than that on Multistep. The improvement of the natural accuracy on Multistep is higher than that on Cyclic. 

In addition, the build time of TRADES with CTR is slightly higher than that without CTR, which is acceptable while being compared with the improvement of robustness.

\subsubsection{Robustness Evaluation of Fast-AT, Free-AT and Madry-AT w/ or w/o CTR}
\label{sec:robustness-evaluation-of-Fast-Free-Madry}
Note that due to the slow convergence of adversarial training with CTR, we further explore the robustness of the model by appropriately increasing the training time.

\textbf{Comparison of Fast-AT, Free-AT and Madry-AT with and without CTR.} As shown in Table~\ref{tab:accuracies-of-Fast-Free-Madry}, in comparison with Fast-AT without CTR~\cite{Fast-AT}, Fast-AT with CTR (bold in Table~\ref{tab:accuracies-of-Fast-Free-Madry}) can improve the FGSM accuracy by 1.6\% to 3.55\% on Multistep and 1.45\% to 3.55\% on Cyclic, the PGD accuracy by 1.55\% to 6.6\% on Multistep and 2.2\% to 7.4\% on Cyclic, the AutoAttack accuracy by 1.6\% to 1.7\% on Multistep and 2.1\% to 2.3\% on Cyclic while achieving the comparable or slight decreasing natural accuracy.

In comparison with Free-AT without CTR~\cite{Free-AT}, Free-AT with CTR (bold in Table~\ref{tab:accuracies-of-Fast-Free-Madry}) can improve the FGSM accuracy by 3.0\% to 4.3\% on Multistep and 0.7\% to 2.2\% on Cyclic, the PGD accuracy by 2.4\% to 4.35\% on Multistep and 1.3\% to 2.5\% on Cyclic, the AutoAttack accuracy by 2.1\% to 2.2\% on Multistep and 0.7\% to 1.0\% on Cyclic while keeping the comparable natural accuracy.

In comparison with Madry-AT without CTR~\cite{PGD}, Madry-AT with CTR (bold in Table~\ref{tab:accuracies-of-Fast-Free-Madry}) can improve the FGSM accuracy by 1.95\% on Multistep and 4.75\% on Cyclic, the PGD accuracy by 3.35\% on Multistep and 5.35\% on Cyclic while keeping the comparable natural and AutoAttack accuracies.

The evaluation of Fast-AT, Free-AT and Madry-AT with or without CTR on more attacks is shown in Appendix B (available in the online supplemental material).

Therefore, the robustness of the model with CTR is higher than that without CTR on Fast-AT~\cite{Fast-AT}, Free-AT~\cite{Free-AT} and Madry-AT~\cite{PGD} against white-box gradient attacks.

\textbf{Sensitivity of Hyperparameter $\gamma$.} Using the same training time, as the hyperparameter $\gamma$ increases from 1 to 5, Table~\ref{tab:accuracies-of-Fast-Free-Madry} shows that the natural accuracy decreases, and the robust accuracy increases first and then decreases. Therefore, choosing an intermediate value for the hyperparameter $\gamma$ will make the comprehensive performance optimal, such as, $\gamma=$2 for Fast-AT, Free-AT and Madry-AT on Multistep.

\textbf{Comparison of Multistep and Cyclic.} Cyclic is better than Multistep on the natural and robust accuracies.

In addition, Table~\ref{tab:accuracies-of-Fast-Free-Madry} shows that the average training time of each epoch is almost the same on Fast-AT, Free-AT and Madry-AT with and without CTR. However, due to the slow convergence of adversarial training with CTR, the slight increased training time is acceptable for the improvement of the robustness.

\begin{table}[t]
\begin{center}
\caption{The attack success rates (\%) of the different loss functions based gradient attacks on naturally trained and adversarially trained models. Note that NT denotes natural training and TR denotes TRADES.}
\label{tab:attacks-comparison}
\begin{tabular}{ccccccc}
\hline
Methods               & Loss & NT        & \begin{tabular}[c]{@{}c@{}}Fast\\ -AT\end{tabular} & \begin{tabular}[c]{@{}c@{}}Free\\ -AT\end{tabular} & \begin{tabular}[c]{@{}c@{}}Madry\\ -AT\end{tabular} & TR         \\ \hline
\multirow{5}{*}{FGSM} & CE~\cite{FGSM}   & 47.45          & 46.05                                              & 46.05                                              & 43.9                                                & 45.5           \\
                      & SCE (ours)  & 47.55          & \textbf{46.35}                                     & 46.3                                               & \textbf{44.65}                                      & 45.7           \\ \cline{2-2}
                      & KL~\cite{TRADES}   & 26.45          & 24.35                                              & 25.0                                               & 25.05                                               & 22.2           \\
                      & SKL (ours)  & 24.9           & 25.75                                              & 25.55                                              & 24.75                                               & 22.05          \\ \cline{2-2}
                      & STD (ours)  & \textbf{53.75} & 45.95                                              & \textbf{46.95}                                     & 44.55                                               & \textbf{46.6}  \\ \hline
\multirow{5}{*}{PGD}  & CE~\cite{FGSM}   & 75.7           & 53.0                                               & 50.45                                              & 48.65                                               & 53.4           \\
                      & SCE (ours)  & 76.05          & \textbf{54.3}                                      & \textbf{51.75}                                     & \textbf{51.45}                                      & \textbf{54.25} \\ \cline{2-2}
                      & KL~\cite{TRADES}   & 43.55          & 39.35                                              & 36.15                                              & 36.6                                                & 37.85          \\
                      & SKL (ours)  & 43.35          & 40.05                                              & 37.05                                              & 36.65                                               & 38.95          \\ \cline{2-2}
                      & STD (ours)  & \textbf{82.25} & 52.9                                               & 51.2                                               & 49.55                                               & \textbf{54.25} \\ \hline
\multirow{5}{*}{APGD} & CE~\cite{FGSM}   & 89.45          & 55.8                                               & 52.55                                              & 50.7                                                & 56.3           \\
                      & SCE (ours)  & 89.6           & \textbf{57.4}                                      & \textbf{54.85}                                     & \textbf{53.0}                                       & \textbf{57.2}  \\ \cline{2-2}
                      & KL~\cite{TRADES}   & 59.7           & 53.8                                               & 39.2                                               & 39.25                                               & 43.3           \\
                      & SKL (ours)  & 59.75          & 54.25                                              & 39.6                                               & 39.4                                                & 43.55          \\ \cline{2-2}
                      & STD (ours)  & \textbf{90.4}  & 55.8                                               & 53.0                                               & 51.4                                                & 56.65          \\ \hline
\end{tabular}
\end{center}
\end{table}
\begin{figure}[t]
\centering
\includegraphics[width=0.99\columnwidth]{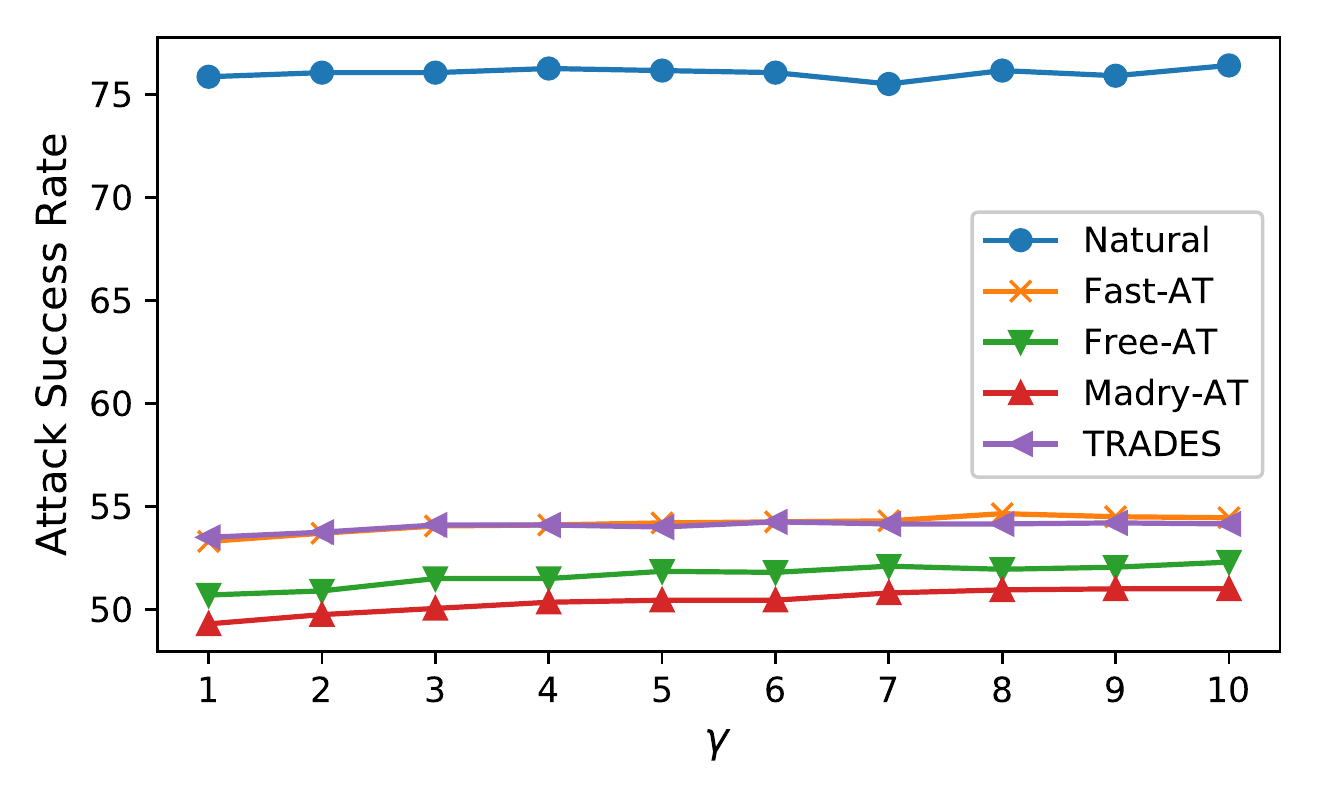} 
\caption{The attack success rate (\%) lines of the SCE loss based PGD on naturally trained and adversarially trained models with different hyperparameter $\gamma$.}
\label{fig:pgd_sce_with_different_gama}
\end{figure}

\subsection{The STD Loss Based Gradient Attack Methods}
\label{sec:std-loss-based-attack-methods}
In this section, the attack success rates (ASR) of the gradient attacks based on different loss functions are compared. Note that the setup not mentioned in this section is the same as Sections~\ref{sec:setup1} and \ref{sec:setup2}.

\subsubsection{Setup}
\label{sec:setup3}
The effectiveness  of the STD loss based gradient attacks is evaluated with CIFAR10~\cite{CIFAR} dataset and ResNet18~\cite{ResNet}. FGSM~\cite{FGSM}, PGD~\cite{PGD} and APGD~\cite{AutoAttack} are selected as the baselines to attack the models trained by natural training (NT), Fast-AT~\cite{Fast-AT}, Free-AT~\cite{Free-AT}, Madry-AT~\cite{PGD} and TRADES~\cite{TRADES}, respectively. The attack strength is $\epsilon=\frac{2}{255}$ on the naturally trained model and $\epsilon=\frac{8}{255}$ on the adversarially trained model. The hyperparameter $\beta$ is set as 5 for the SCE loss and 1 for the SKL loss.

\subsubsection{Evaluation of the STD Loss and Its Variants Based Gradient Attack Methods}
\label{sec:evaluation-STD-loss-attack}
\textbf{ASR Comparison on Naturally Trained Model.} As shown in Table~\ref{tab:attacks-comparison}, the ASR of the STD loss based gradient attacks on the naturally trained model is significantly greater than the other losses based gradient attacks, which verifies the correctness of the theoretical analysis in Section~\ref{sec:STD-based-adversarial-attacks}.

\textbf{ASR Comparison on Adversarially Trained Model.} As shown in Table~\ref{tab:attacks-comparison}, the ASR of the STD loss based gradient attacks on the adversarially trained model is greater than the CE loss based gradient attacks, which verifies the correctness of Proposition~\ref{prop:STD-better-than-CE}.

In most cases, the ASR of the SCE loss based gradient attacks on the adversarially trained model is greater than that of the STD loss based gradient attacks. Because the output of the correct category has strong robustness in the adversarially trained model, the output of the correct category on the STD loss decreases passively slowly (Note that in the softmax function, the input of one category becomes larger, which leads to the output probability of other categories becoming smaller). However, on the SCE loss, the output of the correct category decreases actively faster when the descending gradient of the correct category is 0 in the reverse derivation of the STD loss and greater than 0 in that of the SCE loss, which can be obtained from the proof of Propositions~\ref{prop:MadryAT-with-CTR-better-than-without} and \ref{prop:STD-better-than-CE}. Therefore, i) on the adversarially trained model, the ASR of the SCE loss based gradient attacks is greater than that of the STD loss based; ii) in the theoretical analysis of Section~\ref{sec:STD-based-adversarial-attacks}, to verify that the STD loss based gradient attacks are better than the SCE loss based, an assumption that the output of the correct category has no strong robustness is required.

In addition, the ASR of the SCE (or SKL) loss based gradient attacks is greater than that of the CE (or KL) loss, which verifies that the adversarial examples generated by the SCE or SKL loss for adversarial training are more effective than that by the CE or KL loss in Propositions~\ref{prop:MadryAT-with-CTR-better-than-without} and \ref{prop:TRADES-with-CTR-better-than-without} of Section~\ref{sec:adversarial-training-with-CTR-theoretical-analysis}.

Overall, the STD loss is the key role to achieve high ASR on the naturally trained model and adversarially trained model.

\textbf{Sensitivity of Hyperparameter $\gamma$.} As shown in Fig.~\ref{fig:pgd_sce_with_different_gama}, with the increase of $\gamma$, the ASR of the SCE based PGD increases slowly and linearly, which demonstrates that increasing the weight of the STD function in the SCE loss can improve the ASR.

\section{Related Work}
\label{sec:related-work}
In this section, related regularization methods and adversarial training methods are reviewed.

\subsection{Regularization Methods}
Zhang et al.~\cite{Rethinking-Generalization} concluded two types of regularization methods: explicit regularization and implicit regularization, but the improvement of the DNNs is limited. Novak et al.~\cite{Sensitivity-and-Generalization} explored the sensitivity of trained neural networks in perturbation, and the experimental results show that poor generalization corresponds to lower robustness and good generalization gives rise to more robust functions. Presently, the regularization methods can be categorized into three types: dropout, label smoothing and, batch normalization.

\subsubsection{Dropout and Its Variants.}
StochasticDepth~\cite{Stochastic-Depth} was proposed to avoid overfitting and improve the inference speed of ResNet by adding a probability on residual connect. Different from dropout which randomly selects the neurons to set to zero, weighted channel dropout~\cite{Weighted-Channel-Dropout} operates on the channels in the stack of convolutional layers. Some Dropout variants are suitable for different DNN structures and different applications, such as auto-encoder~\cite{Meta-Dropout}, Transformer~\cite{Structured-Dropout}, recurrent neural network~\cite{Message-Dropout}, image restoration~\cite{Self2self-with-Dropout}, adversarial training~\cite{Adversarial-Training-with-Dropout} and reinforcement learning~\cite{Message-Dropout}.

For dynamic dropout variants, their parameters change dynamically based on prior information. A reinforcement learning approach~\cite{Auto-Dropout} was proposed to find better dropout patterns for various architectures. Contextual dropout~\cite{Contextual-Dropout} was proposed to determine the dropout rate by input covariates. In group-wise dynamic dropout~\cite{Group-Wise-Dynamic-Dropout}, the dropout rate of different groups is dynamically changed based on local feature densities. Guided dropout~\cite{Guided-Dropout} chooses guided neurons for intelligent dropout, which leads to better generalization as compared to the traditional dropout.

Arora et al.~\cite{Dropout-Capacity-Control} investigated the capacity control of dropout in various machine learning problems, and calculated the generalization error bound by Rademacher complexity. Zhang et al.~\cite{Interpreting-and-Boosting-Dropout} verified that dropout can prevent the co-adaptation of features in the DNN with theories and experiments. Wei et al.~\cite{Implicit-and-Explicit-Regularization} analyzed the effectiveness of dropout from two aspects of explicit regularization effect and implicit regularization effect, where explicit regularization effect is explained as that dropout changes the objective loss function of neural networks and implicit regularization effect is regarded as that dropout introduces mean-zero noise in gradient. Pal et al.~\cite{Regularization-Properties-of-Dropblock} theoretically verified that dropblock induces spectral k-support norm regularization and dropconnect is equivalent to dropout for single hidden-layer linear networks.

\subsubsection{Label Smoothing and Its Variants.}
Bahri et al.~\cite{Locally-Adaptive-Label-Smoothing} presented several baselines to reduce churn and a K-Nearest Neighbor based label smoothing outperforming the baselines on churn. Ghoshal et al.~\cite{Low-Rank-Adaptive-Label-Smoothing} presented a low-rank adaptive label smoothing, which can summarize label smoothing and be adaptive to the latent structure of label space in structured representations. Rosenfeld et al.~\cite{Randomized-Smoothing} presented a unifying view of randomized smoothing over arbitrary functions and a new strategy for building classifiers that are robust to general adversarial attacks. Some label smoothing variants appear in co-saliency detection~\cite{Co-Saliency-Detection-Label-Smoothing}, recommender systems~\cite{LS-for-Recommender-Systems} and arbitrary-oriented object detection~\cite{Circular-Smooth-Label}.

Label smoothing is not the root cause of the poor performance of knowledge distillation. Lukasik et al.~\cite{LS-Mitigate-Label-Noise} studied how label smoothing relates to loss-correction techniques, and show that when distilling models from noisy data, label smoothing of the teacher is beneficial, which is in contrast to recent studies for noise-free problems. Shen et al.~\cite{LS-Incompatible-Knowledge-Distillation} explained that teachers with label smoothing provide less subtle semantic information than without label soothing, which is not the main reason for label smoothing failure but the training dataset appears a long-tailed distribution, and the number of the class is increased.

\subsubsection{Batch Normalization and Its Variants.}
Batch normalization~\cite{Batch-Normalization} solved the problem of internal covariate shift, which leads to an unstable training process and slow convergence speed. Summers et al.~\cite{Improve-Batch-Normalization} presented four tricks to improve batch normalization, especially proposing a method for reasoning about the current example in inference normalization statistics, fixing a training vs. inference discrepancy. Transformer and meta learning also have their batch normalization versions, PowerNorm~\cite{Power-Norm} and TaskNorm~\cite{Task-Norm} respectively. Batch normalization has been demonstrated as an effective strategy to avoid networks collapsing quickly with depth~\cite{BN-Avoids-Ranks-Collapse}. Li et al.~\cite{Disharmony-Dropout-BN} proposed two solutions to alleviate the variance shift between dropout and batch normalization.

\subsection{Adversarial Training Methods}
Due to the linear nature~\cite{FGSM} of the DNN, the DNN is vulnerable to the imperceptible perturbation~\cite{FGSM,PGD}. The adversarial training is the most effective defense methods, which are divided into two categories based on the purposes: the only adversarial samples trained~\cite{Fast-AT,Free-AT} and the trade-off between robustness and accuracy~\cite{TRADES,ATTA,FAT}, which have been introduced in Section~\ref{sec:introduction}.

\section{Conclusion}
\label{sec:conclusion}
In this paper, we propose and analyze a confidence threshold reduction (CTR) method to improve the generalization and robustness of the model simultaneously. Specifically, for natural training, to reduce the confidence threshold (CT), we propose a mask-guided divergence loss function (MDL), which consists of a cross-entropy loss term and an orthogonal term. The target of the orthogonal term is to increase the diversity of the ensemble sub-DNN and reduce the CT. The empirical and theoretical analysis demonstrates that the MDL loss improves the generalization and robustness simultaneously with the CTR. However, the model robustness improvement of natural training with the CTR is not comparable to that of adversarial training. Therefore, for adversarial training, to reduce the CT, we propose a standard deviation loss (STD) to further improve the robustness of the model by integrating it into the loss function of adversarial training. The empirical and theoretical analysis demonstrates that the STD loss based loss function can further improve the robustness of the adversarially trained model while keeping the natural accuracy unchanged or slightly improved. In addition,  the STD loss and its variants based gradient attack methods can achieve great attack success rate.


%





\ifCLASSOPTIONcaptionsoff
  \newpage
\fi



%
\bibliographystyle{IEEEtran}
\bibliography{ref}

%

\clearpage

\appendices

\section{Robustness Evaluation of TRADES w/ or w/o CTR}
This section provides the robustness evaluation of TRADES with and without CTR on more attack methods. As shown in Table~\ref{tab:accuracies-of-TRADES-on-more-attacks}, in comparison with TRADES without CTR ($\beta$=1), TRADES with CTR ($\beta\geq$1 and bold in Table~\ref{tab:accuracies-of-TRADES-on-more-attacks}) can improve the robustness of the model against more white-box gradient attacks (i.e., MIFGSM, APGD and PGD-SCE), white-box optimization attack (i.e., CW$_\infty$) and black-box attack (i.e., Square). Particularly, TRADES with CTR can significantly improve the robustness of the model against white-box gradient attacks.

\section{Robustness Evaluation of Fast-AT, Free-AT and Madry-AT w/ or w/o CTR}
This section provides the robustness evaluation of Fast-AT, Free-AT and Madry-AT with and without CTR on more attack methods. In comparison with Fast-AT, Free-AT and Madry-AT without CTR, these methods with CTR (bold in Table~\ref{tab:accuracies-of-Fast-Free-Madry-on-more-attacks}) can improve the robustness of the model against more white-box gradient attacks (i.e., MIFGSM, APGD and PGD-SCE) while keeping the comparable white-box optimization attack accuracy and comparable black-box attack accuracy.

\begin{table*}[t]
\begin{center}
\caption{The accuracies (\%) of TRADES with or without CTR on more attacks. Time denotes build time (mins). Bold indicates higher performance. MI denotes MIFGSM.}
\label{tab:accuracies-of-TRADES-on-more-attacks}
\begin{tabular}{ccccccccccccc}
\hline
                               &      & \multicolumn{1}{c|}{}     & \multicolumn{5}{c|}{Multistep}                                                                  & \multicolumn{5}{c}{Cyclic}                                                        \\ \hline
Methods                        & $\beta$ & \multicolumn{1}{c|}{$\gamma$} & MI             & APGD           & PGD-SCE        & CW$_\infty$             & \multicolumn{1}{c|}{Square} & MI             & APGD           & PGD-SCE        & CW$_\infty$             & Square        \\ \hline
TRADES w.o. CTR                & 1    & 0                         & 48.1           & 43.6           & 45.8           & 45.1           & 50.0                        & 48.75          & 44.15          & 46.55          & 46.1           & 50.6          \\ \cline{1-1}
\multirow{9}{*}{TRADES w. CTR} & 1    & 1                         & 49.15          & 44.45          & 46.55          & 45.8           & 52.0                        & 49.45          & 45.1           & 47.35          & 46.5           & 50.6          \\
                               & 1    & 2                         & 49.3           & 45.0           & 46.65          & 45.5           & 50.7                        & 47.85          & 43.6           & 45.8           & 44.8           & 48.9          \\
                               & 1    & 3                         & 49.55          & 44.35          & 46.45          & 44.8           & 49.7                        & 48.4           & 43.7           & 45.6           & 44.65          & 49.8          \\
                               & 1    & 4                         & 44.0           & 38.9           & 40.6           & 39.25          & 45.5                        & 48.45          & 44.15          & 45.0           & 43.15          & 49.0          \\
                               & 1    & 5                         & 40.25          & 34.85          & 35.35          & 32.9           & 40.9                        & 38.4           & 31.5           & 34.9           & 32.7           & 38.7          \\
                               & 2    & 3                         & \textbf{51.2}  & \textbf{46.4}  & \textbf{48.2}  & \textbf{46.45} & \textbf{51.1}               & \textbf{51.15} & \textbf{47.2}  & \textbf{48.5}  & \textbf{47.05} & \textbf{51.0} \\
                               & 3    & 3                         & \textbf{52.85} & \textbf{48.6}  & \textbf{49.5}  & \textbf{47.75} & \textbf{51.5}               & \textbf{53.1}  & \textbf{48.6}  & \textbf{49.9}  & \textbf{48.1}  & \textbf{52.4} \\
                               & 4    & 3                         & \textbf{54.15} & \textbf{49.65} & \textbf{50.7}  & \textbf{48.55} & \textbf{52.6}               & \textbf{53.55} & \textbf{50.5}  & \textbf{51.1}  & \textbf{49.9}  & \textbf{52.9} \\
                               & 5    & 3                         & \textbf{53.1}  & \textbf{49.7}  & \textbf{50.15} & \textbf{48.95} & \textbf{51.7}               & \textbf{54.3}  & \textbf{51.15} & \textbf{51.65} & \textbf{50.5}  & \textbf{52.7} \\ \cline{1-1}
TRADES w.o. CTR                & 6    & 0                         & 55.05          & 51.75          & 52.2           & 50.65          & 54.5                        & 53.55          & 50.6           & 51.45          & 50.35          & 52.8          \\ \cline{1-1}
\multirow{7}{*}{TRADES w. CTR} & 6    & 1                         & \textbf{55.25} & \textbf{51.9}  & \textbf{52.4}  & \textbf{50.65} & \textbf{54.0}               & 53.1           & 50.2           & 50.85          & 49.8           & 53.5          \\
                               & 6    & 2                         & 54.85          & 51.05          & 51.4           & 49.8           & 53.4                        & 54.3           & 50.8           & 51.25          & 49.4           & 53.9          \\
                               & 6    & 3                         & 53.85          & 50.45          & 50.5           & 48.85          & 52.4                        & \textbf{53.55} & \textbf{50.2}  & \textbf{50.95} & \textbf{49.15} & \textbf{52.0} \\
                               & 6    & 4                         & 53.8           & 49.8           & 50.0           & 47.95          & 51.8                        & 53.65          & 50.45          & 50.4           & 48.95          & 52.8          \\
                               & 6    & 5                         & 52.85          & 49.25          & 48.95          & 47.25          & 51.2                        & 53.6           & 49.3           & 49.1           & 47.9           & 50.8          \\
                               & 7    & 1                         & \textbf{53.95} & \textbf{51.3}  & \textbf{51.55} & \textbf{49.8}  & \textbf{53.8}               & \textbf{53.95} & \textbf{51.9}  & \textbf{52.0}  & \textbf{50.9}  & \textbf{53.1} \\
                               & 8    & 1                         & 55.4           & 51.9           & 52.3           & 50.8           & 53.7                        & 53.65          & 51.35          & 51.45          & 50.3           & 53.0          \\ \hline
\end{tabular}
\end{center}
\end{table*}
\begin{table*}[t]
\begin{center}
\caption{The accuracies (\%) of Fast-AT, Free-AT and Madry-AT with or without CTR on more attacks. Time denotes build time (mins). Bold indicates higher performance. Note that due to the overfit of the model, Fast-AT without CTR runs 40 epochs on Multistep. MI denotes MIFGSM.}
\label{tab:accuracies-of-Fast-Free-Madry-on-more-attacks}
\begin{tabular}{ccccccccccccc}
\hline
                                  &      & \multicolumn{1}{c|}{}       & \multicolumn{5}{c|}{Multistep}                                                                  & \multicolumn{5}{c}{Cyclic}                                                        \\ \hline
Methods                           & $\gamma$ & \multicolumn{1}{c|}{Epochs} & MI             & APGD           & PGD-SCE        & CW$_\infty$             & \multicolumn{1}{c|}{Square} & MI             & APGD           & PGD-SCE        & CW$_\infty$             & Square        \\ \hline
Fast-AT w.o. CTR                  & 0    & 40                          & 48.15          & 44.25          & 45.6           & 45.8           & 50.2                        & 47.15          & 41.85          & 44.95          & 45.55          & 51.2          \\ \cline{1-1}
\multirow{7}{*}{Fast-AT w. CTR}   & 1    & 50                          & 49.75          & 44.45          & 47.2           & 46.95          & 51.6                        & 47.35          & 42.5           & 44.6           & 45.8           & 49.9          \\
                                  & 2    & 50                          & \textbf{50.0}  & \textbf{45.65} & \textbf{46.25} & \textbf{46.7}  & \textbf{50.5}               & \textbf{49.05} & \textbf{44.4}  & \textbf{45.85} & \textbf{46}    & \textbf{49.4} \\
                                  & 3    & 50                          & 50.9           & 46.45          & 45.8           & 44.1           & 49.8                        & 50.8           & 45.9           & 45.65          & 44.6           & 49.2          \\
                                  & 4    & 50                          & \textbf{54.2}  & \textbf{51.2}  & \textbf{47.6}  & \textbf{46.2}  & \textbf{50.3}               & \textbf{54.15} & \textbf{50.35} & \textbf{47.7}  & \textbf{46.15} & \textbf{49.6} \\
                                  & 5    & 50                          & 52.8           & 49.9           & 46.45          & 44.65          & 47.3                        & 52.25          & 50.25          & 46.95          & 45.0           & 48.2          \\
                                  & 4    & -                           & 54.0           & 49.55          & 47.8           & 46.5           & 50.2                        & 52.8           & 49.2           & 47.35          & 45.7           & 49.0          \\
                                  & 5    & -                           & 52.9           & 48.75          & 47.2           & 45.4           & 48.6                        & 54.05          & 50.45          & 48.75          & 46.4           & 49.4          \\ \hline
Free-AT w.o. CTR                  & 0    & 50                          & 50.7           & 47.45          & 48.25          & 47.65          & 50.6                        & 51.7           & 48.55          & 49.05          & 48.7           & 53.5          \\ \cline{1-1}
\multirow{8}{*}{Free-AT w. CTR}   & 1    & 50                          & 51.6           & 48.25          & 48.4           & 47.85          & 51.7                        & \textbf{52.55} & \textbf{49.95} & \textbf{49.35} & \textbf{49.35} & \textbf{52.0} \\
                                  & 2    & 50                          & 52.5           & 49.35          & 47.8           & 47.15          & 51.8                        & 53.5           & 50.85          & 49.6           & 48.6           & 52.4          \\
                                  & 3    & 50                          & 53.55          & 50.8           & 48.3           & 46.55          & 51.1                        & 55.15          & 52.85          & 50.15          & 48.8           & 51.5          \\
                                  & 4    & 50                          & 51.8           & 49.3           & 45.95          & 44.1           & 46.7                        & 55.2           & 53.3           & 49.6           & 47.2           & 51.4          \\
                                  & 1    & -                           & 51.7           & 48.85          & 48.1           & 47.65          & 51.7                        & \textbf{52.25} & \textbf{49.5}  & \textbf{48.95} & \textbf{48.2}  & \textbf{52.7} \\
                                  & 2    & -                           & \textbf{52.7}  & \textbf{50.1}  & \textbf{49.05} & \textbf{48.4}  & \textbf{52.9}               & \textbf{53.65} & \textbf{51.0}  & \textbf{49.15} & \textbf{48.5}  & \textbf{52.9} \\
                                  & 3    & -                           & \textbf{54.75} & \textbf{51.65} & \textbf{49.3}  & \textbf{47.85} & \textbf{51.8}               & 55.4           & 52.95          & 50.95          & 48.45          & 53.2          \\
                                  & 4    & -                           & 52.75          & 49.9           & 47.0           & 44.85          & 49.1                        & 56.65          & 54.35          & 50.55          & 48.7           & 52.0          \\ \hline
Madry-AT w.o. CTR                 & 0    & 50                          & 52.0           & 49.35          & 49.7           & 49.15          & 53.9                        & 51.15          & 47.6           & 48.65          & 48.95          & 52.8          \\ \cline{1-1}
\multirow{10}{*}{Madry-AT w. CTR} & 1    & 50                          & 53.35          & 50.75          & 49.8           & 49.7           & 52.6                        & 52.6           & 49.35          & 49.35          & 48.9           & 51.3          \\
                                  & 2    & 50                          & 54.85          & 52.1           & 50.2           & 49.2           & 53.2                        & 53.55          & 50.75          & 49.15          & 48.25          & 51.8          \\
                                  & 3    & 50                          & 56.8           & 54.45          & 50.9           & 49.05          & 53.1                        & \textbf{56.4}  & \textbf{54.1}  & \textbf{49.7}  & \textbf{48.15} & \textbf{51.5} \\
                                  & 4    & 50                          & 56.0           & 53.85          & 49.5           & 47.4           & 50.4                        & 55.45          & 53.7           & 49.9           & 47.8           & 50.7          \\
                                  & 5    & 50                          & 52.8           & 50.7           & 46.05          & 44.5           & 46.5                        & 52.05          & 50.35          & 45.8           & 44.1           & 46.8          \\
                                  & 1    & -                           & 53.45          & 50.9           & 50.05          & 49.55          & 52.0                        & 52.45          & 48.1           & 48.9           & 48.35          & 52.6          \\
                                  & 2    & -                           & \textbf{54.4}  & \textbf{51.8}  & \textbf{50.05} & \textbf{48.25} & \textbf{52.1}               & 52.4           & 48.25          & 48.7           & 47.7           & 51.3          \\
                                  & 3    & -                           & 57.15          & 54.5           & 51.2           & 49.7           & 54.4                        & 55.05          & 51.5           & 49.65          & 48.2           & 52.2          \\
                                  & 4    & -                           & 56.1           & 53.15          & 49.55          & 48.1           & 51.5                        & 57.55          & 54.15          & 50.25          & 48.65          & 52.1          \\
                                  & 5    & -                           & 55.5           & 52.95          & 48.45          & 46.2           & 50.2                        & 56.05          & 53.35          & 50.25          & 47.6           & 50.3          \\ \hline
\end{tabular}
\end{center}
\end{table*}
%




\end{document}